\DeclareSymbolFont{rsfscript}{OMS}{rsfs}{m}{n}
\DeclareSymbolFontAlphabet{\mathrsfs}{rsfscript}
\newtheorem{theorem}{Theorem}
\newtheorem{proposition}[theorem]{Proposition}
\theoremstyle{remark}
\newtheorem{example}{Example}
\newtheoremstyle{TheoremNum}
        {\topsep}{\topsep}              %%% space between body and thm
        {\itshape}                      %%% Thm body font
        {}                              %%% Indent amount (empty = no indent)
        {\bfseries}                     %%% Thm head font
        {.}                             %%% Punctuation after thm head
        { }                             %%% Space after thm head
        {\thmname{#1}\thmnote{ \bfseries #3}}%%% Thm head spec
    \theoremstyle{TheoremNum}
\newtheorem{theoremnum}{Theorem}
\newtheorem{propositionnum}{Proposition}
\def\namedlabel#1#2{\begingroup
   \def\@currentlabel{#2}%
   \label{#1}\endgroup
}
\lstdefinelanguage{RBG}{
	alsoletter  = {\#},
    basicstyle=\scriptsize,
    stepnumber=1,
    numbers=left,
    numbersep=8pt,
    xleftmargin=16pt,
    numberstyle=\color{gray},
    keywords={\#board,\#variables,\#pieces,\#players,\#rules,
      \#name,\#bPieces,\#diagonalMove,\#wBishopMove,\#wQueenMove,
      \#anySquare,\#turn,\#line,\#m0,\#m1,\#m2,\#m3,\#m4,\#m5,\#m6,\#m7,\#pickUpPiece,\#basicMove,\#endGame,\#checkForWin,\#fullMove,\#up8Times,\#majorPieces,\#promotePawn
    },
    keywordstyle=\bfseries\color{purple},
    keywords=[2]{
        and,or,not},
    keywordstyle=[2]\bfseries\color{black},
    morecomment=[l]{//}, % l is for line comment
    morecomment=[s]{/*}{*/}, % s is for start and end delimiter
    commentstyle=\itshape\color{cyan},
    morestring=[b]", % defines that strings are enclosed in double quotes
    stringstyle=\color{teal},
}
\renewcommand{\O}{\mathcal{O}}
\renewcommand{\mit}[1]{\mathit{#1}}
\newcommand{\doubleto}{\to\mathrel{\mkern-13mu}\to}
\begin{document}
% The file aaai.sty is the style file for AAAI Press 
% proceedings, working notes, and technical reports.
%
\title{Regular Boardgames}
\author{Jakub Kowalski, Maksymilian Mika, Jakub Sutowicz, Marek Szyku{\l}a\\
Institute of Computer Science, University of Wroc{\l}aw, Wroc{\l}aw, Poland\\
jko@cs.uni.wroc.pl, mika.maksymilian@gmail.com, jakubsutowicz@gmail.com, msz@cs.uni.wroc.pl\\
}
\maketitle
\begin{abstract}
We propose a new General Game Playing (GGP) language called Regular Boardgames (RBG), which is based on the theory of regular languages. The objective of RBG is to join key properties as expressiveness, efficiency, and naturalness of the description in one GGP formalism, compensating certain drawbacks of the existing languages. This often makes RBG more suitable for various research and practical developments in GGP. While dedicated mostly for describing board games, RBG is universal for the class of all finite deterministic turn-based games with perfect information. We establish foundations of RBG, and analyze it theoretically and experimentally, focusing on the efficiency of reasoning. Regular Boardgames is the first GGP language that allows efficient encoding and playing games with complex rules and with large branching factor (e.g.\ amazons, arimaa, large chess variants, go, international checkers, paper soccer).
\end{abstract}

%\pagestyle{plain}\setcounter{page}{1}\pagenumbering{arabic} % Temporary to count pages. Limit 7+1.

%%%%%%%%%%%%%%%%%%%%%%%%%%%%%%%%%%%%%%%%%%%%%%%%%%%%%%%%%%%%

\section{Introduction}

Since 1959, when the famous General Problem Solver was proposed \cite{Newell1959Report}, a new challenge for the Artificial Intelligence has been established.
As an alternative for the research trying to solve particular human-created games, like chess \cite{Campbell2002Deep}, checkers \cite{Schaeffer2007Checkers}, or go \cite{Silver2016Mastering}, this new challenge aims for creating less specialized algorithms able to operate on a large domain of problems.

In this trend, the General Game Playing (GGP) domain was introduced in \cite{Pitrat1968Realization}. Using games as a testbed, GGP tries to construct universal algorithms that perform well in various situations and environments. Pell's METAGAMER program from 1992 was able to play and also to generate a variety of simplistic chess-like games \cite{Pell1992METAGAME}. 
The modern era of GGP started in 2005, with the annual International General Game Playing Competition (IGGPC) announced by Stanford's Logic Group \cite{Genesereth2005General}. Since that time, GGP became a well-established domain of AI research consisting of multiple domains and competitions \cite{Swiechowski2015Recent}.

The key of every GGP approach is a well-defined domain of games (problems) that programs are trying to play (solve). 
Such a class of games should be properly formalized, so the language used to encode game rules will be sound. 
For practical reasons, the language should be both easy to process by a computer and human-readable, so conciseness and simplicity are in demand.
Finally, the defined domain has to be broad enough to provide an appropriate level of challenge.

\noindent\textbf{Existing languages.}
METAGAME is based on Dawson's Theory of Movements \cite{Dickins1971AGuide} that formalizes types of piece behavior in chess-like games.
It contains a large number of predefined keywords describing allowed piece rules, including exceptional ones like promotions or possessing captured pieces.
Despite that, METAGAME's expressiveness remains very limited and it cannot encode some core mechanics of chess or shogi (e.g.\ castling, en passant).

Ludi system was designed solely for the sake of procedural generation of games from a restricted domain \cite{Browne2010Evolutionary}.
The underlying GGP language is based on a broad set of predefined concepts, which makes designing a potential player a laborious task.
It describes in a high-level manner a rich family of combinatorial games (which, however, does not include e.g.\ the complete rules of chess).

Simplified Boardgames \cite{Bjornsson2012Learning} describes chess-like games using regular expressions to encode movement of pieces. It overcomes some of the METAGAME's limitations and does not require extended vocabulary. However, as allowed expressions are simplistic and applied only to one piece at once, it cannot express any non-standard behavior (e.g.\ promotions or piece addition). Actually, the rules of almost all popular board games (except breakthrough) cannot be fully expressed in this language.

GDL~\cite{Love2008General}, used in IGGPC, can describe any turn-based, finite, and deterministic $n$-player game with perfect information. It is a high-level, strictly declarative language, based on Datalog \cite{Abiteboul1995Foundations}.
GDL does not provide any predefined functions.
Every predicate encoding the game structure like a board or a card deck, or even arithmetic operators, must be defined explicitly from scratch.
This makes game descriptions long and hard to understand, and their processing is computationally very expensive, as it requires logic resolution.
In fact, many games expressible in GDL could not be played by any program at a decent level or would be unplayable at all due to computational cost.
For instance, features like longest ride in checkers or capturing in go are difficult and inefficient to implement.
Thus, in such cases, only simplified rules are encoded.

The generality of GDL provides a very high level of challenge and led to many important contributions \cite{Genesereth2014General}, especially in Monte Carlo Tree Search enhancements \cite{Finnsson2008Simulation,Finnsson2010Learning}. However, the downside of domain difficulty is a fairly low number of existing GGP players and competition entries.
GDL extensions, e.g.\ GDL-II \cite{Schiffel2014Representing}, which removes some of the language restrictions, are even more difficult to handle.

TOSS \cite{Kaiser2011FirstOrder}, proposed as a GDL alternative (it is possible to translate GDL into TOSS), is based on first-order logic with counting. The language structure allows easier analysis of games, as it is possible to generate heuristics from existential goal formulas.

On the other hand, VGDL \cite{Schaul2013AVideo} used in currently very popular General Video Game AI Competition \cite{Perez2016General}, is strictly focused on representing real-time Atari-like video games (similarly as the Arcade Learning Environment \cite{Bellemare2013TheArcade}).
Instead of game descriptions, the competition framework provides a forward model that allows, and simultaneously forces, simulation-based approaches to play the game.
By limiting knowledge-based approaches, which can learn how to play the game by analyzing its rules, the competition in some sense contradicts the idea of general game playing as stated by Stanford's GGP.

\noindent\textbf{Our contribution.}
The existing languages are designed for different purposes and come with different shortcomings concerning key issues as expressiveness, efficiency, and structural description.
We introduce \emph{Regular Boardgames} (\emph{RBG}), a new GGP language, which presents an original view on formalizing game rules, employing both new ideas and the best ones from the other languages.
Its main goal is to allow effective computation of complex games, while at the same time being universal and allowing concise and easy to process game descriptions that intuitively correspond to the game structure.
The base concept is a use of regular languages to describe legal sequences of actions that players can apply to the game state.
While descriptions are intuitive, RBG requires non-trivial formal basis to be well defined.

In this work, we formally introduce the language and provide its theoretical foundations.
Our experiments concern the efficiency of reasoning.
The basic version presented here describes perfect information deterministic games, but it can be extended by randomness and imperfect information, and is a basis for developing even more efficient languages.
The full version of this paper is available at~\cite{Kowalski2018RBGextended}.

%%%%%%%%%%%%%%%%%%%%%%%%%%%%%%%%%%%%%%%%%%%%%%%%%%%%%%%%%%%%%%%%%%%%%%%%
\section{Regular Boardgames Language}\label{sec:definition}

%\subsection{Abstract RBG Description}

An \emph{abstract RBG description} is a 7-tuple $\mathcal{G}=(\mit{Players}, \mit{Board}, \mit{Pieces},\mit{Variables}, \mit{Bounds}, \mit{InitialState},\\\mit{Rules})$.
It is a complete formalization of a board game.

\noindent\textbf{Players}.
$\mathit{Players}$ is a finite non-empty set of \emph{players}.
For instance, for chess we would have $\mathit{Players}=\{\mit{white},\mit{black}\}$.

\noindent\textbf{Board}.
$\mathit{Board}$ is a representation of the board without pieces, i.e., the squares together with their neighborhood.
It is a static environment, which does not change during a play.
Formally, $\mathit{Board}$ is a 3-tuple $\left(\mathit{Vertices},\mathit{Dirs},\delta\right)$, which describes a finite directed multigraph with labeled edges.
$\mathit{Vertices}$ is its set of vertices, which represent the usual squares or other elementary parts of the board where pieces can be put.
The edges of the graph have assigned labels from a finite set $\mathit{Dirs}$, whose elements are called \emph{directions}.
For every $v \in \mathit{Vertices}$, we have at most one outgoing edge from $v$ with the same label.
Hence, the edges are defined by a function $\delta\colon \mathit{Vertices} \times \mathit{Dirs} \to \mathit{Vertices} \cup \{\bot\}$, where $\delta(v,d)$ is the ending vertex of the unique outgoing edge from $v$ labeled by $d$, or $\delta(v,d)=\bot$ when such an edge does not exist.
The uniqueness of the outgoing edges with the same label will allow walking on the graph $\mathit{Board}$ by following some directions.

The usual $8\times 8$ chessboard can be defined as follows:
The set $\mathit{Vertices} = \{(i,j)\mid 1 \le i,j \le 8\}$ represents the 64 squares, $\mathit{Dirs} = \{\mit{left},\mit{right},\mit{up},\mit{down}\}$ is the set of the four directions, and the edges are given by $\delta((i,j),\mit{left}) = (i-1,j)$ for $i \ge 2$, $\delta((1,j),\mit{left})=\bot$, and analogously for the other three  directions.

\noindent\textbf{Pieces}. 
$\mathit{Pieces}$ is a finite non-empty set whose elements are called \emph{pieces}; they represent the usual concept of the elements that may be placed on the board.
We assume that $\mathit{Pieces} \cap \mathit{Players} = \emptyset$.
During a play, there is a single piece assigned to every vertex.
Hence, in most game descriptions, there is a special piece in $\mathit{Pieces}$ denoting an empty square.
For instance, for chess we can have $\mathit{Pieces}=\{\mit{empty},\mit{wPawn},\mit{bPawn},\mit{wKnight},\mit{bKnight},\ldots\}$.

\noindent\textbf{Variables}.
There are many board games where game states are not restricted to the board configuration.
Counters, storages, or flags are commonly found; even chess requires abstract flags for allowing castling or en passant.
To express such features naturally and without loss of efficiency, we add the general concept of variables.
Each variable has a bounded domain and stores an integer from $0$ to its maximum.
The second purpose of having variables is to provide the outcome of a play in a universal way, by allowing to assign an arbitrary score for every player.
Hence, for every player, we have a variable whose value stores the \emph{score} of this player.
Formally, $\mathit{Variables}$ is a finite set of \emph{variables}.
We assume $\mathit{Variables} \cap \mathit{Pieces} = \emptyset$ and, since each player has a variable storing his score, $\mathit{Players} \subseteq \mathit{Variables}$.

\noindent\textbf{Bounds}.
$\mathit{Bounds}\colon \mathit{Variables} \to \mathbb{N}$ is a function specifying the maximum values separately for every variable.

\noindent\textbf{Game state}.
A \emph{semi-state} is a 4-tuple $S=(\mathit{player},P,V,s)$, where
$\mathit{player} \in \mathit{Players}$ is the \emph{current player}, $P\colon \mathit{Vertices} \to \mathit{Pieces}$ is a complete assignment specifying the pieces that are currently on the board's vertices, $V\colon \mathit{Variables} \to \mathbb{N}$ is a complete assignment specifying the current values of the variables, and $s \in \mathit{Vertices}$ is the \emph{current position}.
In every semi-state, we will have $V(v) \in \{0,\ldots,\mathit{Bounds}(v)\}$ for all $v \in \mathit{Variables}$.

A \emph{game state} is a semi-state $S$ with additionally the \emph{rules index} $r \in \mathbb{N}$. It indicates which part of the rules applies currently (explained later).
We denote the game state with semi-state $S$ and rules index $r$ by $S_r$; it contains all information that may change during a play.

\noindent\textbf{Initial state}. $\mit{InitialState}$ is an arbitrary semi-state. A play begins from the game state $\mit{InitialState}_0$.

\noindent\textbf{Actions}.
An \emph{action} is an elementary operation that can be applied to a semi-state $S$. It modifies $S$ and/or verifies some condition.
Depending on $\mathcal{S}$, an action may be \emph{valid} or not.
The resulted semi-state after applying an action $a$ is denoted by $S\cdot a$, which is a copy of $S$ additionally modified as defined for the particular action $a$.
The actions are:
\begin{enumerate}[wide]
\item[1.] \textbf{Shift}: denoted by $\mit{dir} \in \mathit{Dirs}$.
This action changes the position $s$ to $\delta(s,\mathit{dir})$.
It is valid only if $\delta(s,\mit{dir}) \neq \bot$.
% $(\mathit{player}, P, V, s) \cdot \mit{dir} = (\mathit{player}, P, V, \delta(s,\mathit{dir}))$

\item[2.] \textbf{On}: denoted by a subset $X \subseteq \mathit{Pieces}$.
This action checks if $P(s)\in X$, i.e.,\ if the piece on the current position is from $X$.
It does not modify the semi-state and is valid only if this condition holds.
Note that the empty set $\emptyset$ is never valid and $\mathit{Pieces}$ is always valid.

\item[3.] \textbf{Off}: denoted by $[x]$ for $x \in \mathit{Pieces}$.
This action sets $P(s)=x$, i.e.,\ the next semi-state $S\cdot a$ contains piece $x$ on the current square (the previous piece is replaced).

\item[4.] \textbf{Assignment}: denoted by $[\$\,v=e]$ for $v \in \mathit{Variables}$ and $e$ being an arithmetic expression.
An \emph{arithmetic expression} is either a single value $r \in \mathbb{Z} \cup \mathit{Variables} \cup \mathit{Pieces}$, or (recursively) $e_1 \oplus e_2$, where $e_1$, $e_2$ are arithmetic expressions and $\oplus \in \{+,-,\cdot,/\}$ is a binary arithmetic operation (addition, subtraction, multiplication, and integer division).
The expression $e$ is evaluated as follows.
An $r \in \mathit{Variables}$ takes its value $V(r)$ in the current semi-state, and $r \in \mathit{Pieces}$ is evaluated to the number of pieces $r$ currently on the board, i.e.\ the cardinality $|\{s \in \mathit{Squares} \mid P(s)=r\}|$. Operators $+,-,\cdot,/\}$ are evaluated naturally.
If the value of $e$ is in $\{0,\ldots,\mit{Bounds}(v)\}$ then it is assigned to $v$ and the action is valid; otherwise the action is not valid.

\item[5.] \textbf{Comparison}: denoted by $\{\$\,e_1 \otimes e_2\}$, where $e_1,e_2$ are arithmetic expressions defined as above, and $\otimes \in \{<,\leq,=,\neq,>,\geq\}$ is a relational operator on integers.
This action is valid only if after evaluating the expressions, the relation is true.
It does not modify the semi-state.

\item[6.] \textbf{Switch}: ${\to}{p}$, where $p \in \mathit{Players}$.
This action changes the current player $\mathit{player}$ to $p$.
It is always valid.
\end{enumerate}
Offs, assignments, and switches are called \emph{modifiers}.

Given a semi-state $S$, one can perform a sequence of actions $a_1 \dots a_k$, which are applied successively to $S$, and results in $S \cdot a_1 \cdot \ldots \cdot a_k$.
The sequence is \emph{valid} for $S$ if all the actions are valid when they are applied.

\begin{example} One of the moves of a white knight in chess can be realized by the following sequence:
$\{\mit{wKnight}\}\,[\mit{empty}]\,\mit{left}\,\mit{up}\,\mit{up}\,\{\mit{empty}\}\,[\mit{wKnight}]\,{\to}{\mit{black}}$.
The first action (on) checks if there is a white knight on the current square.
Then this square becomes empty (off), and we change the current square (shift) three times to the final destination.
The destination square is checked for emptiness (on), and a white knight is placed on it (off).
Finally, the black player takes the control (switch).
\end{example}

\noindent\textbf{Rules}.
If action is valid, it only means that it is applicable.
An action is \emph{legal} for a game state $S_r$ if it is a valid action for $S$ that also conform the rules, which depends on $r$.

To give a formal definition of legal actions, we will use a bit of the theory of regular languages.
We deal with regular expressions whose input symbols are actions and with the three usual operators: concatenation $\cdot$ (the dot is omitted usually), sum $+$, and the Kleene star ${}^*$.
We do not need the empty word nor the empty language symbols.
Given a regular expression $E$, by $\mathcal{L}(E)$ we denote the regular language defined by $E$.
This language specifies allowed sequences of actions that can be applied to a game state.

\begin{example}
The regular expression specifying all white knight's moves in chess can look as follows:\\
${\;}(\mit{left}^*\,{+}\,\mit{right}^*)\,(\mit{up}^*\,{+}\,\mit{down}^*)\,\{\mit{wKnight}\}\,[\mit{empty}] \\
{\;}(\mit{left}\,\mit{left}\,\mit{down}\,{+}\,\mit{left}\,\mit{left}\,\mit{up}\,{+}\,\mit{left}\, \mit{down}\,\mit{down}\,{+}\,\hdots) \\
{\;}\{\mit{empty},\mit{bPawn},\mit{bKnight},\ldots\}\,[\mit{wKnight}]\,{\to}\mit{black}$. \\
The words from the language begin with an arbitrary number of either $\mit{left}$ or $\mit{right}$ shift actions, which are followed by an arbitrary number of $\mit{up}$ or $\mit{down}$ actions.
By choosing a suitable number of repetitions, we can obtain the semi-state with an arbitrarily selected current square.
The next two actions check if the current square contains a white knight and make it empty.
Then, we can select any of the eight knight's direction patterns.
After that, the current square is checked for if it is empty or contains a black piece to capture.
Finally, a white knight is placed and the black player takes the control.
\end{example}

Regular expressions define sets of allowed sequences of actions applied from the beginning of the game, but we want to define them also when we have already applied a part of such a sequence.
For this, we have the rules position in a game state, which indicates the current position in a regular expression.
Let $\mathrm{pref}(L)$ be the language of all prefixes of the words from a language $L$.
When the rules index is $0$ -- the beginning of the expression --  we will have the property that every allowed sequence is from $\mathrm{pref}(\mathcal{L}(E))$.

However, after applying some actions, our rules index will be changed so that the next available actions will form a continuation: concatenated to the previously applied actions, will be a prefix of a word from the language.

By $\hat{E}$ we denote the regular expression where all actions are subsequently numbered starting from $1$; thus they become pairs $a_i$ of an action $a$ and an index $i$.
For example, if $E=\mit{up}\,\mit{left} + [x]\,(\mit{up}\,[y])^*$ then $\hat{E}=\mit{up}_1\,\mit{left}_2 + [x]_3\,(\mit{up}_4\,[y]_5)^*$.
Indexing is used to distinguish the action occurrences in a regular expression when the same action appears multiple times because our rules index must point a unique position.
Hence we will be applying indexed actions to game states, similarly as non-indexed actions are applied to semi-states.
Suppose that we have already applied a word $u$; then we can apply any word $v$ such that $uv \in \mathrm{pref}(\mathcal{L}(\hat{E}))$.
The set of words $w$ such that $uw \in \mathcal{L}(\hat{E})$ is commonly known as a \emph{derivative} of $\mathcal{L}(\hat{E})$ by $u$ \cite{Brz64}; it is also a regular language.
Because the indexed actions in $\hat{E}$ occur uniquely, this set is completely determined by the last indexed action $a_i$ in $u=u'a_i$:
\begin{proposition}\label{pro:unique-derivative}
For every indexed action $a_i$, the non-empty derivatives of $\mathcal{L}(\hat{E})$ by $u' a_i$ are the same for every word $u'$.
\end{proposition}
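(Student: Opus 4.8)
The plan is to exploit that every indexed action occurs exactly once in $\hat{E}$, so that $\hat{E}$ is a \emph{linear} expression and $\mathcal{L}(\hat{E})$ is a \emph{local} language---one whose words are recognizable from purely local data. Concretely, I would attach to $\hat{E}$ three sets: $\mathrm{First}$, the indexed actions that may begin a word of $\mathcal{L}(\hat E)$; $\mathrm{Last}$, those that may end such a word; and $\mathrm{Follow}$, the ordered pairs $(a_i,b_j)$ such that $a_ib_j$ occurs as a factor of some word of $\mathcal{L}(\hat E)$. Writing $\mathrm{Follow}(a_i)=\{b_j \mid (a_i,b_j)\in\mathrm{Follow}\}$, the target is the local characterization: a word $c_1\cdots c_n$ (with $n\ge 1$) lies in $\mathcal{L}(\hat E)$ if and only if $c_1\in\mathrm{First}$, $c_n\in\mathrm{Last}$, and $(c_k,c_{k+1})\in\mathrm{Follow}$ for all $1\le k<n$.

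First I would prove this characterization by structural induction on $\hat E$, giving the three sets for a single indexed action and for each of concatenation, sum, and Kleene star (the classical Glushkov computation). Linearity is what makes the induction valid: since no indexed action repeats, the sets built for subexpressions never identify two distinct occurrences, so the globally stated factor/first/last conditions coincide with those assembled from the parts. I expect the star case to be the main obstacle, as one must verify that iteration contributes exactly the pairs $(a_i,b_j)$ with $a_i$ a last symbol of the body and $b_j$ a first symbol of the body, and nothing more; but with unique indices this is bookkeeping rather than a genuine difficulty. Note also that the characterization is only ever applied to non-empty words $u'a_iw$, so whether $\varepsilon\in\mathcal{L}(\hat E)$ is irrelevant.

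Granting the characterization, the proposition follows by decoupling the prefix and suffix conditions. Fix $a_i$ and a word $u'$ for which the derivative is non-empty, so $u'a_i\in\mathrm{pref}(\mathcal{L}(\hat E))$; hence there is at least one $w$ with $u'a_iw\in\mathcal{L}(\hat E)$. Membership of $u'a_iw$ splits into conditions internal to $u'a_i$ (its first symbol in $\mathrm{First}$ and all its consecutive pairs in $\mathrm{Follow}$), and conditions internal to $w$ together with the single joining pair $(a_i,\text{first symbol of }w)$. The first group does not mention $w$ and is already satisfied because $u'a_i$ is a prefix of a word in the language; it therefore holds for \emph{every} candidate $w$. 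Consequently $u'a_iw\in\mathcal{L}(\hat E)$ holds precisely when $w$ itself satisfies the remaining constraints, namely: either $w=\varepsilon$ and $a_i\in\mathrm{Last}$, or $w=c_1\cdots c_m$ with $c_1\in\mathrm{Follow}(a_i)$, $c_m\in\mathrm{Last}$, and $(c_k,c_{k+1})\in\mathrm{Follow}$ for all $k$.

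Every one of these remaining constraints is expressed solely through $a_i$---via $\mathrm{Follow}(a_i)$ and the test $a_i\in\mathrm{Last}$---and through the fixed global sets $\mathrm{Follow}$ and $\mathrm{Last}$, with no reference whatsoever to $u'$. Thus the derivative equals a language determined entirely by $a_i$, so it is the same for every $u'$ yielding a non-empty derivative, which is exactly the claim.
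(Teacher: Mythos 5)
Your proof is correct, but it takes a different route from the paper's. The paper proves this proposition via Thompson's construction: since each indexed action $a_i$ occurs exactly once in $\hat{E}$, the Thompson NFA for $\mathcal{L}(\hat{E})$ has a unique transition labeled $a_i$, hence a unique target state $q_i$; the non-empty derivative by $u'a_i$ is then the language accepted from $q_i$, which manifestly depends only on $a_i$. You instead exploit linearity through the Glushkov/local-language characterization, reducing membership to the $\mathrm{First}$/$\mathrm{Last}$/$\mathrm{Follow}$ conditions and observing that, once the derivative is non-empty, the constraints on the continuation $w$ mention only $a_i$, $\mathrm{Follow}$, and $\mathrm{Last}$. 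Both arguments hinge on the same underlying fact --- after reading an occurrence of $a_i$ the ``state'' is determined by $a_i$ alone --- but the paper's version is shorter and reuses the Thompson NFA that its reasoning algorithms (e.g.\ the proof of Theorem~\ref{thm:computing-moves2}) are built on, so the object $\mathcal{L}(\hat{E})_i$ gets a concrete automaton-theoretic realization for free. Your version carries the extra burden of proving the local characterization by structural induction (the star case in particular), but in exchange it yields an explicit syntactic description of the derivative as the set of words $c_1\cdots c_m$ with $c_1\in\mathrm{Follow}(a_i)$, consecutive pairs in $\mathrm{Follow}$, and $c_m\in\mathrm{Last}$ (plus $\varepsilon$ when $a_i\in\mathrm{Last}$), which is a slightly stronger and self-contained statement. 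One small point to keep explicit if you write this up: the conditions internal to $u'a_i$ are witnessed by any single $w_0$ in the non-empty derivative, which is exactly why they can be discharged uniformly for all $w$; you say this, and it is the crux of the decoupling step.
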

\noindent We denote this derivative by $\mathcal{L}(\hat{E})_i$, and when $u$ is empty, we define $\mathcal{L}(\hat{E})_0 = \mathcal{L}(\hat{E})$.
Following our previous example, if $\hat{E}=\mit{up}_1\,\mit{left}_2 + [x]_3\,(\mit{up}_4\,[y]_5)^*$, then $\mathcal{L}(\hat{E})_4$ is the language defined by $[y]_5\,(\mit{up}_4\,[y]_5)^*$.

Finally, we define: for a game state $S_r$ under a regular expression $E$, an indexed action $a_i$ is \emph{legal} if $a$ is valid for semi-state $S$ and $a_i \in \mathrm{pref}(\mathcal{L}(\hat{E})_r)$.
The resulted game state is $(S\cdot a)_i$, i.e.,\ consists of the resulted semi-state and the index $i$ of the last applied action.
This definition is extended naturally to sequences of indexed actions.

$\mit{Rules}$ is a regular expression as above.
A \emph{move sequence} is an action sequence with exactly one switch, which appears at the end.
A play of the game starts from $\mit{InitialState}_0$.
The current player applies to the current game state a legal move sequence under $\mit{Rules}$, which defines his \emph{move}.
The play ends when there is no legal move sequence.

%%%%%%%%%%%%%%%%%%%%%%%%%%%%%%%%%%%%%%%%%%%%%%%%%%%%%%%%%%%%%%%%%%%%%%%%%%%%%%%%%%%%%%%

We finally add two more advanced elements.

\noindent\textbf{Patterns}.
There is another kind of action that can check more sophisticated conditions.
\begin{enumerate}[wide]
\item[7.] \textbf{Pattern}: denoted by either $\{? M\}$ or $\{! M\}$, where $M$ is a regular expression without switches.
$\{? M\}$ is valid for a semi-state $S$ if and only if there exists a legal sequence of actions under $M$ for $S_0$ (equivalently if there is a valid sequence of actions for $S$ from $\mathcal{L}(M)$).
$\{! M\}$ is the negated version.
These actions do not modify the semi-state.
\end{enumerate}
Patterns can be nested; thus $M$ can contain their own patterns and so on.
\begin{example}
Using a pattern, we can easily implement the special chess rule that every legal move has to leave the king not checked, by ending the white's sequences with:\\
$\big(!\,(\mathrm{standard\ black\ actions})\,\{\$\,\mit{wKing}=0\}\big)\ {\to}{\mit{black}}$.\\
Suppose that ``$(\mathrm{standard\ black\ moves})$'' stands for all possible black's action sequences (move a pawn, move a knight, etc.) respectively.
Then $(!\ \hdots)$ checks if the black player can capture the white king. If so, the pattern is not valid, hence a sequence of the white player containing it is illegal.
\end{example}

\noindent\textbf{Keeper}.
There is a special player called \emph{keeper}, who performs actions that do not belong to regular players but to the game manager, e.g.\ maintaining procedures, scores assignment, ending the play.
The keeper can have many legal sequences of actions, but we will admit only the \emph{proper} game descriptions, where his choice does not matter: the resulted game state must be the same regardless of the choice.
For example, when the keeper removes a group of pieces, he may do this in any order as long as the final game state does not depend on it.
Hence, the game manager can use an arbitrary strategy for the keeper, e.g.\ apply the first found sequence.

The keeper is an important part of the language for efficiency reasons since he can be used to split players' action into selection and application parts.
Hence, the keeper can significantly improve the efficiency of computing all legal moves (e.g.\ in MCTS), because player sequences can be much shorter and we do not have to compute all their applications when we do not need to know all next game states.
For example, in reversi, the player during his turn just puts a pawn on the board, which completely determines his move, and after that, the keeper swaps the color of the opponent's pawns; then the next player takes control.

Formally, we assume that there is the unique object $\mathfrak{Keeper} \in \mit{Players}$ representing the keeper, and a double arrow ${\doubleto}$ denotes a switch to the keeper.
\begin{example}
A typical keeper usage is to check winning condition and end the play.\\
${\;}{\doubleto}\,\big(\{?\,\mathrm{white\ wins}\}\,[\$\,\mit{white}{=}100]\,[\$\,\mit{black}{=}0]\,{\doubleto}\,\emptyset\\
{\;}+ \{!\,\mathrm{white\ wins}\}\,{\to}black\big)$.\\
This fragment is to be executed right after a white's move.
The first option checks if white has already won the game (subexpression ``white wins''), sets the scores, and continues with the keeper that has the empty \emph{on} action $\emptyset$, which is always illegal thus ends the play.
Note that the fragment is deterministic, i.e., the keeper has exactly one legal sequence.
\end{example}

\begin{figure}
\lstset{numbers=left,xleftmargin=16pt}
\begin{lstlisting}
#players = white(100), black(100) // 0-100 scores
#pieces = e, w, b
#variables = // no additional variables
#board = rectangle(up,down,left,right,
         [b, b, b, b, b, b, b, b]
         [b, b, b, b, b, b, b, b]
         [e, e, e, e, e, e, e, e]
         [e, e, e, e, e, e, e, e]
         [e, e, e, e, e, e, e, e]
         [e, e, e, e, e, e, e, e]
         [w, w, w, w, w, w, w, w]
         [w, w, w, w, w, w, w, w])
#anySquare = ((up* + down*)(left* + right*))
#turn(me; myPawn; opp; oppPawn; forward) =
  anySquare {myPawn}     // select any own pawn
  [e] forward ({e} + (left+right) {e,oppPawn})
  ->> [myPawn]           // keeper continues
  [$ me=100] [$ opp=0]   // win if the play ends
  (   {! forward} ->> {} // if the last line then end
    + {? forward}->opp) // otherwise continue
#rules = ->white (
    turn(white; w; black; b; up)
    turn(black; b; white; w; down)
  )* // repeat moves alternatingly
\end{lstlisting}
\caption{The complete RBG description of breakthrough.}\label{fig:breakthrough}
\end{figure}

\subsection{RBG Language}

The simplest version of the RBG language is \emph{low-level RBG} (\emph{LL-RBG}), which directly represents an abstract RBG description in the text. 
It is to be given as an input for programs (agents, game manager), thus it is simple and easy to process.
An extension of LL-RBG is the \emph{high-level RBG} (HL-RBG), which allows more concise and human-readable descriptions. HL-RBG can be separately converted to LL-RBG.
This split joins human readability with machine processability and allows to further develop more extensions in HL-RBG without the need to modify implementations.
The technical syntax specification is given in~\cite{Kowalski2018RBGextended}, and here we give an overall view.

\noindent\textbf{LL-RBG.}
In LL-RBG there are a few definitions of the form \lstinline{#name = definition}.
We have \lstinline{#board} specifying $\mit{Board}$ together with the initial pieces assignment, \lstinline{#players} and \lstinline{#variables} specifying the sets $\mit{Players}$ and $\mit{Variables}$ together with $\mit{Bounds}$, \lstinline{#pieces} specifying $\mit{Pieces}$, and \lstinline{#rules} defining the regular expression $\mit{Rules}$.
$\mit{InitialState}$ is the semi state where all variables are set to $0$, the current player is the keeper, and the current position and the pieces assignment are defined by \lstinline{#board}.
The simplification of defining $\mit{InitialState}$ is not a restriction since we can set any state at the beginning of $\mit{Rules}$.

\noindent\textbf{HL-RBG.}
In high-level RBG we add a simple substitution C-like macro system.
A macro can have a fixed number of parameters and is defined by \lstinline{#name(p1;...;pk) = definition} (with $k$ parameters) or \lstinline{#name = definition} (without parameters).
After the definition, every further occurrence of \textit{name} is replaced with \textit{definition}, where additionally every parameter occurrence in \textit{definition} is replaced with the values provided.
There are a few other HL-RBG extensions over LL-RBG, such as predefined functions to generate typical boards (e.g.\ rectangular, hexagonal, cubical, etc.).

A complete example of game \emph{breakthrough} in HL-RBG is given in Fig.~\ref{fig:breakthrough}.
For instance, the corresponding expression in LL-RBG obtained by unrolling $\mit{turn}$ macro in line~22 is:
\lstset{numbers=none,xleftmargin=0pt}
\begin{lstlisting}
((up*+down*)(left*+right*)) {w} [e] up
({e} + (left+right) {e,b}) ->>
[w][$white=100][$black=0] ({!up} ->> {} + {?up} ->opp)}
\end{lstlisting}
Note that the placement of the moved pawn is postponed to line~18.
The keeper performs this action instead of the player since the move is already defined in line~17.

% \begin{example}
% The following procedure realizes generically a diagonal move (with capturing or not) in chess:
% \lstset{numbers=none,xleftmargin=0pt}
% {\rm\begin{lstlisting}
% #diagonalMove(piece; oppPieces) = {piece} [empty] (
%       (left up {empty})* left up + (left down {empty})* left down
%     + (right up {empty})* right up + (right down {empty})* right down
%   ) {empty,oppPieces} [piece]
% \end{lstlisting}}
% Then it can be instantiated with a given piece and a list of opponent's pieces allowed to be captured. We can then easily implement moves of queens and bishops:
% {\rm\begin{lstlisting}
% #bPieces = bPawn, bKnight, bBishop, bRook, bQueen, bKing
% #wBishopMove = diagonalMove(wBishop; bPieces)
% #wQueenMove = diagonalMove(wQueen; bPieces) + orthogonalMove(wQueen; bPieces)
% \end{lstlisting}}
% \end{example}

%%%%%%%%%%%%%%%%%%%%%%%%%%%%%%%%%%%%%%%%%%%%%%%%%%%%%%%%%%%%%%%%%%%%%%%
\subsection{Proper RBG Description and Transition Model}

We state two conditions that a \emph{proper} RBG description must satisfy.
They will ensure that the game is finite, well defined, and also allow reasoning algorithms to be more efficient.

In the first condition, we bound the number of modifiers that can be applied during a play, including entering patterns.
This implies that we cannot reach the same game state after applying a modifier in the meantime and that every play will eventually finish.
For example, the simplest forbidden construction is $[x]^*$; however, $(\mit{left}\,[x])^*$ is allowed if the number of valid repetitions of $\mit{left}$ shift action is bounded.

Let \emph{straightness} of a word $w$ be the number of modifier occurrences in $w$.
The \emph{straightness} of a language $L$ is the maximum of the straightnesses of all $w \in L$; if the maximum does not exist, the straightness of $L$ is infinite.
Note that the valid sequences for $\mit{InitialState}$ from $\mathcal{L}(\mit{Rules})$ describe all possible plays.
However, to take into account patterns, we need to introduce one more definition.
For a semi-state $S$ and a language $L$ of non-indexed action sequences, we define the \emph{application language}, which consists of all valid sequences that we could apply when starting from $S$.
This includes all valid sequences from $\mathrm{pref}(L)$ and also the valid sequences when we are allowed to ``go inside'' a pattern.
Formally, $\mathrm{app}(S,L)$ is defined recursively by:
\begin{gather*}
\mathrm{app}(S,L) = \{u\text{ is a valid word for $S$ from }\mathrm{pref}(L)\}\\ \cup\ \{uv \mid u \text{ is a valid word for $S$ from }\mathrm{pref}(L),\\
u\{? M\}\text{ or }u\{! M\} \in \mathrm{pref}(L),
v \in \mathrm{app}(S\cdot u,\mathcal{L}(M))\}.
\end{gather*}
% \begin{align*}
% \mathrm{app}(S,L) =\ & \{u\text{ is a valid word for $S$ from }\mathrm{pref}(L)\}\ \cup \\ 
% & \{uv \mid u\text{ is a valid word for $S$ from }\mathrm{pref}(L),\\
% & u\{? M\}\text{ or }u\{! M\} \in \mathrm{pref}(L),\\
% & v \in \mathrm{app}(S\cdot u,\mathcal{L}(M))\}.
% \end{align*}
Therefore, in $\mathrm{app}(S,L)$ there are valid sequences of the form $u_0 u_1 \dots u_h$, where a $u_i$ is a valid prefix of a word from a pattern language nested at depth $i$ (and from $L$ for $i=0$).
We require that for the initial state and the rules, the straightness of the application language is finite.

The second condition states that the keeper strategy does not matter as his actions always eventually yield the same game state when another player takes control.
Formally, a game state $S_r$ is \emph{reachable} if there exists a legal sequence for $\mit{InitialState}_0$ under $\mit{Rules}$ that yields $S_r$.
For a game state, by its \emph{keeper completion} we mean a game state after applying any legal move sequence as long as the current player is the keeper and there exists such a sequence.
Applying a move sequence can be repeated several times, and if the current player is not the keeper, the keeper completion is the very same game state. 
In Fig.~\ref{fig:breakthrough}, the keeper in lines~19--20 always has exactly one choice, depending on whether we can perform $\mit{forward}$ shift.
However, the construction ${\doubleto}(\mit{left}+\mit{right})\,{\to}p$ could possibly yield two keeper completions differing by the position, and then it is incorrect.

Finally, an RBG description is \emph{proper} if:
\begin{enumerate}
\item The straightness of $\mathrm{app}(\mit{InitialState},\mathcal{L}(\mit{Rules}))$ is finite.
\item For every reachable game state, there is exactly one keeper completion.
\end{enumerate}

\noindent Now we define precisely the game tree represented by an RBG description, which is important, e.g.\ for drawing moves at random during a Monte-Carlo search.

A \emph{move} is a sequence of pairs $(i,v)$, where $i$ is the index of a modifier in $\hat{\mit{Rules}}$ and $s \in \mathit{Vertices}$ is the position where the modifier is applied; the last indexed modifier must be a switch, and there cannot be other switches.
Every legal move sequence defines a legal move in a natural way.
The number of legal move sequences can be infinite (e.g.\ by $(\mit{up}+\mit{down})^*$), but due to condition~(1), there is a finite number of moves.
For example, in Fig.~\ref{fig:breakthrough}, for the keeper completion of the initial state, the white player has exactly $22$ moves (containing the indices of $[e]$ and $\doubleto$).

The game tree in RBG is constructed as follows.
The root is the keeper completion of $\mit{InitialState}$.
For every legal move of a node (game state), we have an edge to the keeper completion of the game state obtained by applying that move.
The leaves are the nodes without a legal move, where the outcome is the player scores stored in their variables.
Note that in this way we do not count keeper game states (unless they are leaves), which are only auxiliary.

%%%%%%%%%%%%%%%%%%%%%%%%%%%%%%%%%%%%%%%%%%%%%%%%%%%%%%%%%%%%%%%%%%%%%%%%%
\section{Expressiveness and Complexity}

\noindent\textbf{Universality}.
RBG can describe every finite deterministic game with perfect information (without simultaneous moves, which are a form of imperfect information).
To show this formally, we can follow the definition of extensive-form games \cite{Rasmusen1994Games}, which has been used to show that GDL and GDL-II are universal for their classes \cite{Thielscher2011TheGeneral}, and prove that in RBG we can define an arbitrary finite game tree.
It is enough to encode a game tree in $\mit{Rules}$, where for every tree node we create a switch.
\begin{theorem}\label{thm:rbg-is-universal}
RBG is universal for the class of finite deterministic games with full information.
\end{theorem}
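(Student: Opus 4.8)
The plan is to realize an arbitrary finite extensive-form game of perfect information --- a finite rooted tree $T$ together with an owner $p_n \in \mit{Players}$ for each internal node $n$ and an outcome vector at each leaf --- directly as a proper RBG description whose RBG game tree is isomorphic to $T$ with matching outcomes. Following the hint, the board plays no role: I would use a single-vertex board with $\mit{Pieces}=\{\mit{empty}\}$, never use shifts, and carry all information in the variables. I number the nodes of $T$ as $0,\dots,N-1$ with the root being $0$, add a variable $\mit{node}$ with $\mit{Bounds}(\mit{node})=N-1$ to record the current node, and for every player keep the score variable required by $\mit{Players}\subseteq\mit{Variables}$. Outcomes are stored by assigning each player a non-negative integer encoding of their payoff (rescaled to preserve the payoff ordering, hence the game up to the relevant equivalence), with $\mit{Bounds}$ set to the largest such value.

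The key object is the regular expression $\mit{Rules}$. For each internal node $n$ with owner $p_n$ and each child $c$ of $n$, I create one branch $\{\$\,\mit{node}{=}n\}\,[\$\,\mit{node}{=}c]\,{\doubleto}\,K_c$, where the \emph{guard} $\{\$\,\mit{node}{=}n\}$ is a comparison, the assignment moves to the child, and ${\doubleto}$ hands control to the keeper. The keeper continuation $K_c$ is deterministic and depends only on $c$: if $c$ is internal and owned by $q$ it is the single switch ${\to}q$; if $c$ is a leaf it is the score assignments for that leaf followed by ${\doubleto}\,\emptyset$, whose never-valid on-action $\emptyset$ forces the play to end. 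All branches are summed and placed under a Kleene star, and since $\mit{InitialState}$ starts with the keeper in control, $\mit{Rules}$ begins with the keeper completion of the root, i.e. ${\to}p_0$ (or the leaf handling of node $0$ in the degenerate case). Each loop iteration then performs exactly one player move (ending in ${\doubleto}$) followed by one keeper move, matching the keeper-completion semantics.

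Next I would establish the isomorphism between $T$ and the RBG game tree. The crucial observation is that correctness of the legal-move set rests entirely on the comparison guards: when the rules index sits at the start of the summed loop body with $\mit{node}=c$, a branch is valid \emph{iff} its guard tests $\mit{node}{=}c$, so the available moves are exactly the children of $c$ and nothing else --- and since each node has a unique owner, the current player automatically coincides with $p_c$ without any explicit player check. Using Proposition~\ref{pro:unique-derivative} to track the rules index through the star, I would argue by induction on depth that the keeper completion of $\mit{InitialState}$ corresponds to the root, and that the RBG children of the node for $n$ are precisely the keeper completions reached via $n$'s child-branches, i.e. the nodes for the children of $n$; leaves of $T$ become leaves of the RBG tree carrying the assigned scores.

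Finally I would verify the two properness conditions. There are no patterns, so $\mathrm{app}(\mit{InitialState},\mathcal{L}(\mit{Rules}))$ is just the valid prefixes; each loop iteration strictly increases tree depth and contributes a bounded number of modifiers, and after reaching a leaf the trailing ${\doubleto}\,\emptyset$ blocks any continuation, so every valid sequence has length at most a constant times the height of $T$ and the straightness is finite. For the keeper, each $K_c$ offers a single legal sequence determined by $c$, so every reachable game state has exactly one keeper completion. I expect the main obstacle to be precisely the correctness bookkeeping of the third step: carefully pushing the rules index through the Kleene star and the keeper moves to confirm that no spurious legal moves arise and that the auxiliary keeper states collapse correctly, so that the RBG tree nodes are exactly the owner-nodes and leaves of $T$.
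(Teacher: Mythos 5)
Your construction is correct, but it is genuinely different from the paper's. The paper encodes the tree $T$ \emph{structurally} in the regular expression: it defines $\mit{rules}(v)$ inductively, with a leaf becoming score assignments followed by ${\doubleto}\,\emptyset$ and an internal node $v$ owned by player $i$ becoming ${\to}\mit{player}_i\,(\mit{rules}(v_1)+\cdots+\mit{rules}(v_m))$. There the rules index itself tracks the current tree node, so no auxiliary variable, no comparisons, and no Kleene star are needed, and correctness is immediate because the expression tree is isomorphic to $T$. You instead flatten $T$ into a single starred sum of guarded transitions $\{\$\,\mit{node}{=}n\}\,[\$\,\mit{node}{=}c]\,{\doubleto}\,K_c$, storing the current node in a variable and using comparison guards to select the available branches. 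Both constructions are linear in $|T|$ and both are proper; your version costs more verification effort (tracking the rules index through the star via Proposition~1, checking that guards exclude all spurious branches, and checking the two properness conditions explicitly -- all of which you correctly identify and discharge: straightness is finite because the node variable strictly descends in $T$, and each $K_c$ is a deterministic concatenation so keeper completions are unique), but it buys a description of constant nesting depth and illustrates the variable-plus-guard idiom, whereas the paper's nesting depth grows with the height of $T$ and its correctness argument is essentially a one-liner. One small point to make explicit if you write this up: the initial value of $\mit{node}$ must be the root's index (which matches the all-zeros initialization if you number the root $0$), and distinct branches yield distinct moves because their modifiers carry distinct indices in $\hat{\mit{Rules}}$.
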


\begin{table*}[htb]\centering\small\renewcommand{\arraystretch}{1.2}
\newcommand{\rowt}[1]{\multirow{2}{*}{#1}}
\newcommand{\colt}[1]{\multicolumn{2}{c|}{#1}}
\caption{Complexity of basic decision problems.}\label{tab:complexity}
\begin{tabular}{|l|c|c|c|c|}\hline
\bf Subclass                & {\bf Legal move?} (Problem~\ref{pbm:haslegal}) & {\bf Winning strategy?} (Problem~\ref{pbm:winning}) & {\bf Proper description?} (Problem~\ref{pbm:proper}) \\\hline
Unrestricted RBG            & PSPACE-complete                                & EXPTIME-complete                                    & PSPACE-complete \\\hline
$k$-straight RBG ($k \ge 1$)& $\O((|\mathcal{R}|\cdot|\mathcal{S}|)^{k+1})$  & EXPTIME-complete                                    & PSPACE-complete \\\hline
GDL                         & EXPTIME-complete                               & 2-EXPTIME-complete                                  & EXPSPACE-complete \\\hline
\end{tabular}
\end{table*}

\noindent\textbf{Straight RBG}.
We define subclasses of RBG that exhibit better computational properties.
By condition~(1), the number of modifiers during every play is bounded, thus it is also bounded during a single move, i.e., between switches.
The latter is our measure of the complexity of a description.

Given a language $L$, let $\mathrm{mseq}(L)$ be the set of all factors (substrings) of the words in $L$ that do not contain a switch.
We say that an RBG description is \emph{$k$-straight} if the straightness of $\mathrm{mseq}(\mathrm{app}(\mit{InitialState},\mathcal{L}(\mit{Rules})))$ is at most $k$.
When there are no patterns, the straightness is just the maximum length of a legal move (not counting the final switch).
For example, the description in Fig.~\ref{fig:breakthrough} is $3$-straight but not $2$-straight, because there are three modifiers in lines~17--18 and no more than three modifiers can be applied between switches.
Straightness is difficult to compute exactly, but in many cases a reasonable upper bound can be provided.
In a wide subclass of descriptions, the straightness is bounded solely by the rules, independently on the game states, and the bound can be easily computed.

%%%%%%%%%%%%%%%%%%%%%%%%%%%%%%%%%%%%%%%%%%%%%%%%%%%%%%%%%%%%%%%%%%%%%%%
\noindent\textbf{Complexity}.
We consider three representative decision problems, which are important for agents and game managers and characterize the complexity of RBG.
The input is an abstract RBG description, but for more precise results, we split it into the generalized rules $\mathcal{R} = (\mit{Players},\mit{Pieces},\mit{Variables},\mit{Rules})$ and a game instance $\mathcal{S} = (\mit{Board},\mit{Bounds},\mit{InitialState})$.
By the lengths $|\mathcal{R}|$ and $|\mathcal{S}|$ we understand the lengths of their straightforward text representations similar to those in LL-RBG.

\smallskip
\noindent{\bf Problem~1.}\namedlabel{pbm:haslegal}{1} \textit{Does the first player have a legal move?}\\
{\bf Problem~2.}\namedlabel{pbm:winning}{2} \textit{Does the first player have a winning strategy?}\\
{\bf Problem~3.}\namedlabel{pbm:proper}{3} \textit{Is the game description proper?}
\smallskip
% \begin{enumerate}[leftmargin=*,widest={Problem~1},wide,labelindent=0pt]
% \item[{\bf Problem~1.}]\namedlabel{pbm:haslegal}{1} \textit{Does the first player have a legal move?}
% \item[{\bf Problem~2.}]\namedlabel{pbm:winning}{2} \textit{Does the first player have a winning strategy?}
% \item[{\bf Problem~3.}]\namedlabel{pbm:proper}{3} \textit{Is the game description proper?}
% \end{enumerate}

\noindent Problem~\ref{pbm:haslegal} is a very basic problem that every player and game manager must solve to play the game.
Problem~\ref{pbm:winning} is the classical problem of solving the game; we can assume that winning means getting a larger score than the other players.
Problem~\ref{pbm:proper} is important for game designers and is related to exploring the whole game tree.

%The following result is based on a basic reasoning algorithm for RBG and establishes a theoretical bound on the efficiency, based on the straightness of the description.
A basic reasoning algorithm for RBG is based on a DFS on game states; its careful analysis leads to the following:
\begin{theorem}\label{thm:computing-moves2}
For a given $k$-straight description $(k \ge 1)$, the set of all legal moves can be found in $\O((|\mathcal{R}|\cdot|\mathcal{S}|)^k)$ time and in $\O(k(|\mathcal{R}|\cdot|\mathcal{S}|))$ space.
\end{theorem}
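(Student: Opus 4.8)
The plan is to bound a depth-first search over game states that incrementally assembles legal moves, charging its cost against the number of modifiers that $k$-straightness permits. Writing $N = |\mathcal{R}|\cdot|\mathcal{S}|$ for brevity, I would first compile $\hat{\mit{Rules}}$ into a finite automaton whose states are the indexed actions, so that the rules index $r$ of a game state is literally an automaton state and, by Proposition~\ref{pro:unique-derivative}, the candidate continuations from $r$ are exactly its follow-set. A Thompson- or position-style $\epsilon$-NFA has $\O(|\mathcal{R}|)$ states and transitions, and the search proceeds on the product of this automaton with the board, whose configurations are pairs of an automaton state and a current position $s$.

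The structural heart of the argument is that between two consecutive modifier applications the semi-state is frozen apart from $s$: shifts only move the position, while ons and comparisons merely test the fixed $P$ and $V$. Consequently a maximal non-modifier phase, launched from a fixed semi-state and automaton state, is a plain reachability computation on the configuration graph over (automaton state, position) pairs, which has only $\O(N)$ vertices and edges; marking visited configurations absorbs the unbounded shift-loops produced by Kleene stars, so one phase runs in $\O(N)$ time and exposes at most $\O(N)$ reachable modifier-application points. The DFS then branches only at those points --- applying an off, assignment, or switch yields a child phase whose frozen semi-state differs by that one modification --- and by $k$-straightness no switch-free factor carries more than $k$ modifiers, so the phase tree of a single move has modifier-depth at most $k$, with the terminal switch closing the move.

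For the running time I would count by depth: there are $\O(N^{d})$ phases at modifier-depth $d$, each costing $\O(N)$, so the layers contribute a geometric series summing to $\O(N^{k})$, with the deepest (switch) layer being proportional to the emitted moves themselves. Pinning the exponent to exactly $k$ rather than $k+1$ is the subtle accounting step --- consistent with the looser $k+1$ recorded for the decision problem in Table~\ref{tab:complexity} --- and hinges on treating the final switch layer as output rather than as another full expansion. For space, the DFS stores a stack of at most $k$ live phases; each frame holds only the $\O(N)$ visited-set of its reachability and $\O(1)$ undo data for the single modifier on its incoming edge, while the shared $P$, $V$, and board are updated in place and rolled back on backtracking, giving $\O(k\,N)=\O(k(|\mathcal{R}|\cdot|\mathcal{S}|))$ overall.

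The main obstacle will be the patterns, because $\{?M\}$ and $\{!M\}$ spawn recursive sub-searches whose own modifiers count, through the definition of $\mathrm{app}$, toward the straightness of the enclosing sequence. I would exploit the freezing observation again: a pattern is tested against the very semi-state of the current phase, so every pattern check reachable without an intervening modifier can be folded into the same $\O(N)$ product reachability instead of being paid for separately, and the modifiers consumed inside a pattern draw from the same budget as the surrounding move along each nesting path, keeping the combined phase tree of depth at most $k$. Making this collapse precise, so that deeply nested pattern evaluations never inflate the exponent, and checking that the keeper completions demanded by the game-tree semantics do not extend the relevant switch-free factors, are the delicate points to which the remainder of the proof reduces.
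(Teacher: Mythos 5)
Your setup---the Thompson NFA, the product ``play graph'' over (automaton state, board position) pairs, the observation that between modifiers only the current position can change so each switch-free phase is an $\O(N)$ reachability problem, and the $\O(kN)$ space bound via a stack of at most $k$ live phases with in-place updates and rollback---is essentially the paper's basic algorithm (Theorem~\ref{thm:computing-moves1}). But the two points you defer are exactly where the content of Theorem~\ref{thm:computing-moves2} lives, and neither is closed by your sketch. First, the exponent: your own count gives $\O(N^{d})$ phases at modifier-depth $d$, each costing $\O(N)$, so the sum over $d\le k$ is $\O(N^{k+1})$, not $\O(N^{k})$; ``treating the final switch layer as output'' does not help, because the $\O(N^{k})$ phases at depth $k$ each still pay $\O(N)$ for their reachability computation before any switch is emitted. (In fairness, the paper's own proof also concludes $\O((|\mathcal{R}|\cdot|\mathcal{S}|)^{k+1})$, which is what Table~\ref{tab:complexity} records for Problem~\ref{pbm:haslegal}; the exponent $k$ in the theorem statement is inconsistent with both, so you should not expect to reach it by a cleverer charging scheme.)

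Second, and more importantly, patterns are not handled by ``folding'' them into the outer reachability. A pattern $\{?\,M\}$ is a guard: the outer edge may be traversed only after the sub-search from $(q_0^M,s,())$ has been fully resolved to true or false, and that answer depends on the invocation position $s$ and may require applying modifiers inside $M$; merging $M$'s transitions into the outer product graph would let the outer search ``leak'' through an unsatisfied guard. The naive recursive evaluation re-runs the same pattern at up to $|\mathcal{S}|$ positions per phase, which is precisely the $|\mathcal{S}|^{d}$ blow-up of Theorem~\ref{thm:computing-moves1}. The paper's improvement is a concrete mechanism you are missing: for each modifier-application sequence, evaluate the pattern \emph{once for all board positions simultaneously}, by propagating acceptance backwards (a DFS in the inverse play graph of $M$ from accepting configurations and from configurations just after one modifier application), and memoize the resulting position-indexed answers. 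This in turn forces a further fix: the post-modifier starting configurations of a pattern need not be reachable, so straightness does not bound their modifier budget; the paper caps it at $k-h$ and recovers an unknown $k$ by iterative deepening $k=1,2,\ldots$, with the geometric sum dominated by the last term. Without these ingredients your phase-tree accounting does not bound the cost of nested patterns, which is the part of the theorem that actually needs proving. (The remark about keeper completions is a red herring here---the theorem concerns computing the legal moves of a single given state.)
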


\noindent Our results are summarized in Tab.~\ref{tab:complexity}.
For our polynomial result, we made a simplifying assumption that arithmetic operations on variables bounded by $\mit{Bounds}$ can be performed in constant time, which is a practical assumption unless variables are exploited.
The model where both $\mathcal{R}$ and $\mathcal{S}$ are given is the common situation occurring in GGP, where a player sees the game for the first time and must be able to play it shortly after that.
In another scenario, a player knows the rules before and can e.g.\ spend some time on analyzing and preprocessing, possibly even with human aid; then the $\mathcal{R}$ can be considered fixed, and only $\mathcal{S}$ is given.
In this second case, all our hardness results hold as well.

\begin{table*}[htb]\small\renewcommand{\arraystretch}{1.3}
\newcommand{\rowt}[1]{\multirow{2}{*}{#1}}
\newcommand{\rowtt}[1]{\multirow{3}{*}{#1}}
\newcommand{\col}[1]{\multicolumn{1}{c|}{#1}}
\newcommand{\colL}[1]{\multicolumn{1}{c||}{#1}}
\newcommand{\colc}[1]{\multicolumn{1}{|c|}{#1}}
\newcommand{\colt}[1]{\multicolumn{2}{c|}{#1}}
\newcommand{\coltL}[1]{\multicolumn{2}{c||}{#1}}
\newcommand{\coltt}[1]{\multicolumn{3}{c|}{#1}}
\newcommand{\colttt}[1]{\multicolumn{4}{c|}{#1}}
\newcommand{\fstr}{\colL{\rowt{\footnotesize\bf Straightness}}}
\newcommand{\fa}{*}
\newcommand{\fb}{\textdagger}
\caption{The average number of nodes per second for a selection of classical games. The tests were done on a single core of Intel(R) Core(TM) i7-4790 @3.60GHz with 16GB RAM, spending at least ${\sim}10$ min.\ per test.}
\label{tab:experiments}
\begin{center}\begin{tabular}{|l|r||r|r|r|r||r|r|r|}\hline
\colc{\rowt{\bf Game}}&\fstr   &\colt{\bf RBG Compiler}  &\coltL{\bf RBG Interpreter}& \colt{\bf GDL Propnet}  &\bf GDL Prolog\\\cline{3-9}
                      &        &\col{Perft}&\col{Flat MC}&\col{Perft}&\colL{Flat MC} &\col{Perft}&Flat MC& Flat MC      \\\hline
Amazons               &      3 & 7,778,364 &      43,263 & 6,604,412 &        23,371 &    78,680 &         242 &                  13 \\\hline
Arimaa                &$\le 52$&  403,30\fa&          18 &  21,940\fa&             2 & \coltt{\emph{not available}}\\\hline
Breakthrough          &      3 & 9,538,135 &   1,285,315 & 5,113,725 &       371,164 &   589,111 &     175,888 &               4,691 \\\hline
Chess                 &      6 & 2,215,307 &     148,248 &   315,120 &        16,708 &   396,367 &      14,467 &                 120 \\\hline
Chess (without check) &      6 & 6,556,244 &     422,992 & 2,083,811 &        87,281 &   685,546 &      23,625 &               2,702 \\\hline
Connect four          &      2 & 8,892,356 &   3,993,392 & 2,085,062 &     1,024,000 & 3,376,991 &     985,643 &              10,606 \\\hline
Double chess          &      5 & 1,159,707 &      22,095 &   152,144 &         2,249 & \coltt{\emph{not available}}\\\hline
English checkers      &     14 & 3,589,042 &   1,312,813 &   813,439 &       233,519 & 698,829\fb&   225,143\fb&             6,359\fb\\\hline
Go                    &      2 &   557,691 &      66,803 &   137,499 &        17,565 & \coltt{\emph{not available}}\\\hline
Hex (9x9)             &      3 &10,289,555 &   1,048,963 & 5,962,637 &       444,243 &   366,410 &      35,682 &               1,263 \\\hline
International checkers&$\le 44$&   924,288 &     208,749 &   118,227 &        26,754 & \coltt{\emph{not available}}\\\hline
Reversi               &      7 & 2,505,279 &     526,580 &   263,945 &        93,601 &   172,756 &      22,994 & 0                   \\\hline
\end{tabular}\end{center}
\fa\ Arimaa's perft was computed starting from a fixed chess-like position to skip the initial setup.\\
\fb\ English checkers in GDL splits capturing rides, allowing only a single capture per turn (no more accurate version is available).\\
\end{table*}

In conclusion, the complexity of RBG seems to be a good choice, especially for board games. Efficient (polynomial) embeddings are possible because in most popular games these problems can be solved polynomially in the size of the representation (i.e., board). However, there are exceptions, e.g.\ international checkers, where deciding if a given move is legal is coNP-complete, thus polynomial complexity of this task would be insufficient for a concise GGP language.

For a comparison, in Tab.~\ref{tab:complexity} we included GDL \cite{Saffidine2014TheGame} (\emph{bounded GDL}, the version that is used in practice).

%%%%%%%%%%%%%%%%%%%%%%%%%%%%%%%%%%%%%%%%%%%%%%%%%%%%%%%%%%%%%%%%%%%%%%%
\section{Experiments}

We have implemented a computational package for RBG~\cite{Kowalski2018RBGsource}:
a \emph{parser} (of HL-RBG and LL-RBG), an \emph{interpreter} that performs reasoning, a \emph{compiler} that generates a reasoner with a uniform interface, and a \emph{game manager} with example simple players.

To test the efficiency of reasoning, we used two common settings: computing the whole game tree to a fixed depth (\emph{Perft}), and choosing a legal move randomly and uniformly (\emph{flat Monte-Carlo}).
They represent both extremal cases: MC is dominated by computing legal moves, as for each visited node we compute all of them, and in Perft the number of computed nodes (plus one) is equal to the number of applied moves.
The results of our experiments are shown in Table~\ref{tab:experiments}.

For a comparison with GDL, we used the fastest available game implementations (when existing), which we took from \cite{Schreiber2016Games}.
We tested on the same hardware one of the most efficient GDL reasoner based on a propositional network (\emph{propnet}) from \cite{Sironi2016Optimizing}, together with a traditional Prolog reasoner \cite{Schiffel2015GGPServer}.
The idea behind propnets is to express the dynamic of the game in a logic circuit, which can be processed using fast low-level instructions.
Although in general, the propositional networks are the fastest known reasoning algorithm for GDL, initializing a propnet can be heavily time and memory consuming, which is troublesome for large games \cite{Schiffel2014Efficiency}.
For this reason, games exceeding some complexity level are unplayable via the propnet approach.

Summarizing, for simple games both RBG and high-speed GDL reasoners can achieve similar performance (e.g.\ connect four), but a larger size or more difficult rules (e.g.\ amazons, hex, reversi) usually start to make a significant difference.
An exception of chess is mostly caused by the different logic in implementation: RBG uses a general rule to test a check, while the GDL version is optimized with an extensive case analysis (cf.\ chess without check game).
Also, our RBG reasoners are less memory consuming and require smaller initialized time comparing to the propnet.
Finally, RBG allows playing even more complex games and using more accurate rules, which seems to be impossible in GDL at all.
Particularly difficult games for GDL reasoning are those with many moves and moves consisting of multiple steps.
For example, games like go and international checkers, concisely expressible and efficient in RBG, were never encoded in GDL, as they are very difficult to implement and a possible implementation would be likely unplayable.

Finally, we mention the most complex game we have implemented: arimaa -- a game designed to be difficult for computers (it took 12 years to beat the arimaa AI challenge \cite{Wu2015Designing}). The small result for MC comes as a consequence of that computing a single node is equivalent to compute all legal moves for it, which are roughly $200,000$ (as flat MC does not merge the moves that yield the same game state).
In fact, together with the perft result, this shows that arimaa is quite playable, especially if one will be computing selectively a subset of the moves, which is easy in RBG since it defines intermediate game states.

%%%%%%%%%%%%%%%%%%%%%%%%%%%%%%%%%%%%%%%%%%%%%%%%%%%%%%%%%%%%%%%%%%%%%%%
\section{Conclusions}

Being more suitable for particular purposes, RBG fills certain gaps in the existing GGP languages and opens new possibilities.
In particular, RBG allows what was not possible with GDL so far.
Applications lie in all areas of GGP like developing universal agents, learning, procedural content generation, and game analysis. 
Also, developing translations between RBG and GDL is an interesting direction for future research.
RBG has the following advantages:

\noindent\emph{Efficient reasoning}. Essentially, all game playing approaches (min-max, Monte-Carlo search, reinforcement learning, evolutionary algorithms, etc.) require fast reasoning engines.

RBG allows very effective reasoning and is a step toward achieving a high-level universal language with similar performance to game-specific reasoners.
In fact, the natural reasoning algorithm for RBG is a generalization of methods commonly used in game-specific reasoners (e.g.\ modifying game states with elementary actions and backtracking).

\noindent\emph{Complex games}.
RBG allows effective processing complex games and games with potentially large branching factor (e.g.\ amazons, arimaa, non-simplified checkers, double chess, go).
In the existing languages, reasoning for such games is extremely expensive and, in many cases, the game cannot be played at all (except for languages like Ludi or Metagame, which contain dedicated features to define e.g.\ longest ride in checkers, but cannot encode them through universal elements).
Except for amazons and checkers, the above-mentioned games were even not encoded in any GGP language.
Therefore, RBG makes very complex games accessible for GGP through a universal approach.

\noindent\emph{Board games structure}.
It is not surprising that a more specialized language makes developing knowledge-based approaches easier.
RBG allows defining the game structure in a natural way, especially for board games.
It has the concept of the board and pieces, and the rules form a graph (finite automaton).
These are great hints for analyzing the game and an easy source of heuristics, which also make it convenient for procedural content generation.

\noindent\emph{Natural representations}.
Elementary RBG actions that players can perform on the game state directly correspond to the usual understanding of game rules by non-experts, e.g.\ removing and adding pieces, moving on the board to another square.
We can also encode in a natural way advanced rules that usually are difficult to formalize (e.g.\ chess en~passant, capturing ride in checkers).
In our experience, encoding rules in RBG is easier than in GDL, especially for complex games, and the descriptions are considerably shorter.

\noindent\emph{Generalized games}.
Encoding any game in RBG directly separates the generalized rules from a particular instance (board).
In the existing languages, such a separation requires a special effort and is possible only to some extent (e.g.\ in GDL, if the rules are fixed and only the initial game state is the input, the number of possible legal moves depends polynomially on the input size, thus the full rules of checkers cannot be encoded in this way, since we can have an exponential number of moves).
Hence, RBG can open a new investigation setting in GGP, where a player can learn the rules in advance and then must play on any given instance.

%%%%%%%%%%%%%%%%%%%%%%%%%%%%%%%%%%%%%%%%%%%%%%%%%%%%%%%%%%%%
\section{Acknowledgements}

This work was supported by the National Science Centre, Poland under project number
2017/25/B/ST6/01920.

%%%%%%%%%%%%%%%%%%%%%%%%%%%%%%%%%%%%%%%%%%%%%%%%%%%%%%%%%%%%
\bibliographystyle{aaai}
\bibliography{bibliography}

\begin{thebibliography}{}

\bibitem[\protect\citeauthoryear{Abiteboul, Hull, and
  Vianu}{1995}]{Abiteboul1995Foundations}
Abiteboul, S.; Hull, R.; and Vianu, V., eds.
\newblock 1995.
\newblock {\em {Foundations of Databases: The Logical Level}}.
\newblock Addison-Wesley Longman Publishing Co., Inc., 1st edition.

\bibitem[\protect\citeauthoryear{Bellemare \bgroup et al\mbox.\egroup
  }{2013}]{Bellemare2013TheArcade}
Bellemare, M.~G.; Naddaf, Y.; Veness, J.; and Bowling, M.
\newblock 2013.
\newblock {The Arcade Learning Environment: An Evaluation Platform for General
  Agents}.
\newblock {\em Journal of Artificial Intelligence Research} 47:253--279.

\bibitem[\protect\citeauthoryear{Bj\"{o}rnsson}{2012}]{Bjornsson2012Learning}
Bj\"{o}rnsson, Y.
\newblock 2012.
\newblock {Learning Rules of Simplified Boardgames by Observing}.
\newblock In {\em ECAI}, volume 242 of {\em FAIA}. IOS Press.
\newblock  175--180.

\bibitem[\protect\citeauthoryear{Browne and
  Maire}{2010}]{Browne2010Evolutionary}
Browne, C., and Maire, F.
\newblock 2010.
\newblock {Evolutionary game design}.
\newblock {\em IEEE Transactions on Computational Intelligence and AI in Games}
  2(1):1--16.

\bibitem[\protect\citeauthoryear{Brzozowski}{1964}]{Brz64}
Brzozowski, J.~A.
\newblock 1964.
\newblock Derivatives of regular expressions.
\newblock {\em J. ACM} 11(4):481--494.

\bibitem[\protect\citeauthoryear{Campbell, Hoane, and
  Hsu}{2002}]{Campbell2002Deep}
Campbell, M.; Hoane, A.~J.; and Hsu, F.
\newblock 2002.
\newblock {Deep Blue}.
\newblock {\em Artificial intelligence} 134(1):57--83.

\bibitem[\protect\citeauthoryear{Dickins}{1971}]{Dickins1971AGuide}
Dickins, A.
\newblock 1971.
\newblock {\em A Guide to Fairy Chess}.
\newblock Dover.

\bibitem[\protect\citeauthoryear{Finnsson and
  Bj\"{o}rnsson}{2008}]{Finnsson2008Simulation}
Finnsson, H., and Bj\"{o}rnsson, Y.
\newblock 2008.
\newblock {Simulation-based Approach to General Game Playing}.
\newblock In {\em AAAI Conference on Artificial Intelligence}.

\bibitem[\protect\citeauthoryear{Finnsson and
  Bj\"{o}rnsson}{2010}]{Finnsson2010Learning}
Finnsson, H., and Bj\"{o}rnsson, Y.
\newblock 2010.
\newblock {Learning Simulation Control in General Game Playing Agents}.
\newblock In {\em AAAI Conference on Artificial Intelligence},  954--959.

\bibitem[\protect\citeauthoryear{Fraenkel and
  Lichtenstein}{1981}]{Fraenkel1981Computing}
Fraenkel, A.~S., and Lichtenstein, D.
\newblock 1981.
\newblock {Computing a perfect strategy for n$\times$ n chess requires time
  exponential in n}.
\newblock {\em Journal of Combinatorial Theory, Series A} 31(2):199--214.

\bibitem[\protect\citeauthoryear{Genesereth and
  Thielscher}{2014}]{Genesereth2014General}
Genesereth, M., and Thielscher, M.
\newblock 2014.
\newblock {\em {General Game Playing}}.
\newblock Morgan \& Claypool.

\bibitem[\protect\citeauthoryear{Genesereth, Love, and
  Pell}{2005}]{Genesereth2005General}
Genesereth, M.; Love, N.; and Pell, B.
\newblock 2005.
\newblock {General Game Playing: Overview of the AAAI Competition}.
\newblock {\em AI Magazine} 26:62--72.

\bibitem[\protect\citeauthoryear{Kaiser and
  Stafiniak}{2011}]{Kaiser2011FirstOrder}
Kaiser, L., and Stafiniak, L.
\newblock 2011.
\newblock {First-Order Logic with Counting for General Game Playing}.
\newblock In {\em AAAI Conference on Artificial Intelligence}.

\bibitem[\protect\citeauthoryear{Kowalski \bgroup et al\mbox.\egroup
  }{2018a}]{Kowalski2018RBGextended}
Kowalski, J.; Mika, M.; Sutowicz, J.; and Szyku{\l}a, M.
\newblock 2018a.
\newblock {Regular Boardgames}.
\newblock arXiv:1706.02462 [cs.AI].

\bibitem[\protect\citeauthoryear{Kowalski \bgroup et al\mbox.\egroup
  }{2018b}]{Kowalski2018RBGsource}
Kowalski, J.; Mika, M.; Sutowicz, J.; and Szyku{\l}a, M.
\newblock 2018b.
\newblock {Regular Boardgames -- source code}.
\newblock \url{https://github.com/marekesz/rbg1.0/}.

\bibitem[\protect\citeauthoryear{Love \bgroup et al\mbox.\egroup
  }{2006}]{Love2008General}
Love, N.; Hinrichs, T.; Haley, D.; Schkufza, E.; and Genesereth, M.
\newblock 2006.
\newblock {General Game Playing: Game Description Language Specification}.
\newblock Technical report, Stanford Logic Group.

\bibitem[\protect\citeauthoryear{Newell, Shaw, and
  Simon}{1959}]{Newell1959Report}
Newell, A.; Shaw, J.~C.; and Simon, H.~A.
\newblock 1959.
\newblock {Report on a general problem solving program}.
\newblock In {\em IFIP congress}, volume 256, ~64.

\bibitem[\protect\citeauthoryear{Pell}{1992}]{Pell1992METAGAME}
Pell, B.
\newblock 1992.
\newblock {METAGAME in Symmetric Chess-Like Games}.
\newblock In {\em Heuristic Programming in Artificial Intelligence: The Third
  Computer Olympiad.}

\bibitem[\protect\citeauthoryear{Perez \bgroup et al\mbox.\egroup
  }{2016}]{Perez2016General}
Perez, D.; Samothrakis, S.; Togelius, J.; Schaul, T.; and Lucas, S.~M.
\newblock 2016.
\newblock {General Video Game AI: Competition, Challenges and Opportunities}.
\newblock In {\em AAAI Conference on Artificial Intelligence},  4335--4337.

\bibitem[\protect\citeauthoryear{Pitrat}{1968}]{Pitrat1968Realization}
Pitrat, J.
\newblock 1968.
\newblock {Realization of a general game-playing program}.
\newblock In {\em IFIP Congress},  1570--1574.

\bibitem[\protect\citeauthoryear{Rasmusen}{2007}]{Rasmusen1994Games}
Rasmusen, E.
\newblock 2007.
\newblock {\em {Games and Information: An Introduction to Game Theory}}.
\newblock Blackwell, 4th ed.

\bibitem[\protect\citeauthoryear{Saffidine}{2014}]{Saffidine2014TheGame}
Saffidine, A.
\newblock 2014.
\newblock {The Game Description Language Is Turing Complete}.
\newblock {\em IEEE Transactions on Computational Intelligence and AI in Games}
  6(4):320--324.

\bibitem[\protect\citeauthoryear{Schaeffer \bgroup et al\mbox.\egroup
  }{2007}]{Schaeffer2007Checkers}
Schaeffer, J.; Burch, N.; Bj{\"o}rnsson, Y.; Kishimoto, A.; M{\"u}ller, M.;
  Lake, R.; Lu, P.; and Sutphen, S.
\newblock 2007.
\newblock Checkers is solved.
\newblock {\em Science} 317(5844):1518--1522.

\bibitem[\protect\citeauthoryear{Schaul}{2013}]{Schaul2013AVideo}
Schaul, T.
\newblock 2013.
\newblock {A video game description language for model-based or interactive
  learning}.
\newblock In {\em IEEE Conference on Computational Intelligence and Games},
  1--8.

\bibitem[\protect\citeauthoryear{Schiffel and
  Bj\"{o}rnsson}{2014}]{Schiffel2014Efficiency}
Schiffel, S., and Bj\"{o}rnsson, Y.
\newblock 2014.
\newblock {Efficiency of GDL Reasoners}.
\newblock {\em IEEE Transactions on Computational Intelligence and AI in Games}
  6(4):343--354.

\bibitem[\protect\citeauthoryear{Schiffel and
  Thielscher}{2014}]{Schiffel2014Representing}
Schiffel, S., and Thielscher, M.
\newblock 2014.
\newblock {Representing and Reasoning About the Rules of General Games With
  Imperfect Information}.
\newblock {\em Journal of Artificial Intelligence Research} 49:171--206.

\bibitem[\protect\citeauthoryear{Schiffel}{2015}]{Schiffel2015GGPServer}
Schiffel, S.
\newblock 2015.
\newblock {General Game Playing}.
\newblock \url{http://www.general-game-playing.de/downloads.html}.

\bibitem[\protect\citeauthoryear{Schreiber}{2016}]{Schreiber2016Games}
Schreiber, S.
\newblock 2016.
\newblock {Games -- base repository}.
\newblock \url{http://games.ggp.org/base/}.

\bibitem[\protect\citeauthoryear{Schreiber}{2017}]{Schreiber2017GGPBase}
Schreiber, S.
\newblock 2017.
\newblock {The General Game Playing Base Package}.
\newblock \url{https://github.com/ggp-org/}.

\bibitem[\protect\citeauthoryear{Silver \bgroup et al\mbox.\egroup
  }{2016}]{Silver2016Mastering}
Silver, D.; Huang, A.; Maddison, C.~J.; Guez, A.; Sifre, L.; van~den Driessche,
  G.; Schrittwieser, J.; Antonoglou, I.; Panneershelvam, V.; Lanctot, M.;
  Dieleman, S.; Grewe, D.; Nham, J.; Kalchbrenner, N.; Sutskever, I.;
  Lillicrap, T.; Leach, M.; Kavukcuoglu, K.; Graepel, T.; and Hassabis, D.
\newblock 2016.
\newblock {Mastering the game of Go with deep neural networks and tree search}.
\newblock {\em Nature} 529:484--503.

\bibitem[\protect\citeauthoryear{Sironi and
  Winands}{2017}]{Sironi2016Optimizing}
Sironi, C.~F., and Winands, M. H.~M.
\newblock 2017.
\newblock {Optimizing Propositional Networks}.
\newblock In {\em Computer Games}. Springer.
\newblock  133--151.

\bibitem[\protect\citeauthoryear{Stockmeyer and
  Chandra}{1979}]{Stockmeyer1979Provably}
Stockmeyer, L.~J., and Chandra, A.~K.
\newblock 1979.
\newblock {Provably difficult combinatorial games}.
\newblock {\em SIAM Journal on Computing} 8(2):151--174.

\bibitem[\protect\citeauthoryear{{\'S}wiechowski \bgroup et al\mbox.\egroup
  }{2015}]{Swiechowski2015Recent}
{\'S}wiechowski, M.; Park, H.; Ma{\'n}dziuk, J.; and Kim, K.
\newblock 2015.
\newblock {Recent Advances in General Game Playing}.
\newblock {\em The Scientific World Journal} 2015.

\bibitem[\protect\citeauthoryear{Thielscher}{2011}]{Thielscher2011TheGeneral}
Thielscher, M.
\newblock 2011.
\newblock {The General Game Playing Description Language is Universal}.
\newblock In {\em International Joint Conference on Artificial Intelligence},
  1107--1112.

\bibitem[\protect\citeauthoryear{Thompson}{1968}]{Thompson1968ProgrammingTechniques}
Thompson, K.
\newblock 1968.
\newblock Programming techniques: Regular expression search algorithm.
\newblock {\em Commun. ACM} 11(6):419--422.

\bibitem[\protect\citeauthoryear{Wu}{2015}]{Wu2015Designing}
Wu, D.
\newblock 2015.
\newblock Designing a winning arimaa program.
\newblock {\em ICGA Journal} 38(1):19--40.

\end{thebibliography}
%%%%%%%%%%%%%%%%%%%%%%%%%%%%%%%%%%%%%%%%%%%%%%%%%%%%%%%%%%%%
\newpage\onecolumn
\section{\LARGE Appendix}

The appendix contains the extended experimental results, ommited proofs, and the technical specification of the RBG language.

%%%%%%%%%%%%%%%%%%%%%%%%%%%%%%%%%%%%%%%%%%%%%%%%%%%%%%%%%%%%
\section{Extended experimental results}

In this section, we present the extended results of our RBG interpreter and compiler \cite{Kowalski2018RBGsource} in comparison to GDL reasoning engines: propositional networks-based \cite{Sironi2016Optimizing}, a traditional Prolog reasoner (ECLiPSe Prolog) \cite{Schiffel2015GGPServer}, and a GGP prover from \cite{Schreiber2017GGPBase}.
We have implemented in RBG a selection of classical games.

We have tested the efficiency of reasoning in the following two scenarios:
\begin{itemize}
\item \textbf{Perft}: We compute the whole game tree to a fixed depth, starting from the initial state. This is the common performance and debugging test applied for game-specific reasoners, particularly popular for chess programs.
\item \textbf{Flat Monte Carlo}: We perform random playouts from the initial state to the leaves. For every visited state we compute all legal moves, choose one of them uniformly at random, and apply it. This test is the most popular setting in GGP and in can be found in every GDL benchmark.
\end{itemize}
Flat MC is dominated by computing legal moves, whereas perft has a balance between computing moves and states.
In particular, games with a large branching factor get significantly smaller results in MC.

The results for perft are presented in Table~\ref{tab:experiments-perft}, and for flat Monte Carlo in Table~\ref{tab:experiments-mc}.
All the tests were done on a single core of Intel(R) Core(TM) i7-4790 @3.60GHz of a computer with 16GB RAM.

For some of the reasoners there are additional properties playing an important role: the initialization time, and the memory required for the computation.
These values are presented in Table~\ref{tab:experiments-other}.

For GDL, we took the fastest available implementations with the most accurate rules of tested games (if existing), downloaded from~\cite{Schreiber2016Games}.
The particular implementations that we tested are as follows:
amazons -- \texttt{amazons.kif},
amazons (split) -- \texttt{amazons\_10x10.kif},
breakthrough -- \texttt{breakthrough.kif},
English checkers -- \texttt{englishDraughts.kif},
chess -- \texttt{chess\_200.kif},
chess (without check) -- \texttt{speedChess.kif},
connect four -- \texttt{connectFour.kif},
gomoku -- \texttt{gomoku\_15x15.kif},
hex (9x9) -- \texttt{hex.kif},
reversi -- \texttt{reversi.kif},
tic-tac-toe -- \texttt{ticTacToe.kif}.

For connect four, we fixed the GDL description to describe the original 7x6 version.
English checkers in GDL splits capturing rides, allowing only a single capture per turn (no more accurate version is available).
There also exists a simplified Chinese checkers version in GDL; however, it is so distant from the official rules that we did not consider it.

\newpage
\begin{table*}[!htb]\small\renewcommand{\arraystretch}{1.2}
\newcommand{\rowt}[1]{\multirow{2}{*}{#1}}
\newcommand{\rowtt}[1]{\multirow{3}{*}{#1}}
\newcommand{\col}[1]{\multicolumn{1}{c|}{#1}}
\newcommand{\colL}[1]{\multicolumn{1}{c||}{#1}}
\newcommand{\colc}[1]{\multicolumn{1}{|c|}{#1}}
\newcommand{\colcL}[1]{\multicolumn{1}{|c||}{#1}}
\newcommand{\colt}[1]{\multicolumn{2}{c|}{#1}}
\newcommand{\coltL}[1]{\multicolumn{2}{c||}{#1}}
\newcommand{\coltt}[1]{\multicolumn{3}{c|}{#1}}
\newcommand{\colttt}[1]{\multicolumn{4}{c|}{#1}}
\newcommand{\fa}{*}
\newcommand{\fb}{\textdagger}
\newcommand{\fc}{\text{$\ddagger$}}
\caption{Efficiency comparison of RBG and GDL resoners for the \textbf{perft} test. The values are gathered spending ${\sim}10$ min.\ per test, and presented as the average number of computed nodes per second.}\label{tab:experiments-perft}
\begin{center}\begin{tabular}{|l||r|r||r|r|}\hline
\colcL{\rowt{\bf Game}}    &\coltL{\bf RBG}                  &\colt{\bf GDL}            \\\cline{2-5}
                          &\col{Compiler}&\colL{Interpreter}&\col{Propnet}&\col{Prover}\\\hline
Amazons                   &    7,778,364 &       6,604,412 &      78,680 &        487 \\\hline          
Amazons (split)           &    9,825,526 &       6,609,850 &   1,154,050 &      5,762 \\\hline
Arimaa                    &    403,304\fa&        21,940\fa& \colt{\emph{not available}}               \\\hline
Breakthrough              &    9,538,135 &       5,113,725 &     589,111 &      2,914 \\\hline
Chess                     &    2,215,307 &         315,120 &     396,367 &        869 \\\hline
Chess (without check)     &    6,556,244 &       2,083,811 &     685,546 &      1,285 \\\hline
Chinese checkers 6-players&    7,371,902 &       2,111,145 & \colt{\emph{not available}}               \\\hline
Connect four              &    8,892,356 &       2,085,062 &   3,376,991 &      5,979 \\\hline
Double chess              &    1,159,707 &         152,144 & \colt{\emph{not available}}               \\\hline
English checkers          &    3,589,042 &         813,439 &     698,829 &      1,034 \\\hline
Go                        &      557,691 &         137,499 & \colt{\emph{not available}}               \\\hline
Gomoku                    &    7,104,911 &       2,285,159 &   4,007,001 &     31,141 \\\hline
Hex (11x11)               &    9,196,438 &       6,097,961 & \colt{\emph{not available}}               \\\hline
Hex (9x9)                 &   10,289,555 &       5,962,637 &     366,410 &     10,928 \\\hline
International checkers    &      924,288 &         118,227 & \colt{\emph{not available}}               \\\hline
Paper soccer              &    3,105,158 &         781,524 & \colt{\emph{not available}}               \\\hline
Reversi                   &    2,505,279 &         263,945 &     172,756 &        213 \\\hline
The mill game             &    5,992,142 &       1,402,369 & \colt{\emph{not available}}               \\\hline
Tic-tac-toe               &  6,321,218\fb&     2,448,739\fb&   390,530\fb&      5,538 \\\hline
\end{tabular}\end{center}\begin{flushleft}
\fa\ Arimaa's perft was computed starting from a fixed chess-like position to skip the initial setup.\\
\fb\ For tic-tac-toe, the whole game tree is computed in about a second, thus the test could not last enough time to provide reliable results.\\
\end{flushleft}\end{table*}

\newpage

\begin{table*}[!htb]\small\renewcommand{\arraystretch}{1.2}
\newcommand{\rowt}[1]{\multirow{2}{*}{#1}}
\newcommand{\rowtt}[1]{\multirow{3}{*}{#1}}
\newcommand{\col}[1]{\multicolumn{1}{c|}{#1}}
\newcommand{\colL}[1]{\multicolumn{1}{c||}{#1}}
\newcommand{\colc}[1]{\multicolumn{1}{|c|}{#1}}
\newcommand{\colcL}[1]{\multicolumn{1}{|c||}{#1}}
\newcommand{\colt}[1]{\multicolumn{2}{c|}{#1}}
\newcommand{\coltL}[1]{\multicolumn{2}{c||}{#1}}
\newcommand{\coltt}[1]{\multicolumn{3}{c|}{#1}}
\newcommand{\colttt}[1]{\multicolumn{4}{c|}{#1}}
\newcommand{\fa}{*}
\caption{Efficiency comparison of RBG and GDL reasoners for the \textbf{flat Monte Carlo} test. The values are gathered spending ${\sim}10$ min.\ per test, and presented as the average number of computed nodes per second.}\label{tab:experiments-mc}
\begin{center}\begin{tabular}{|l||r|r||r|r|r|}\hline
\colcL{\rowt{\bf Game}}    &\coltL{\bf RBG}                  &\coltt{\bf GDL}                        \\\cline{2-6}
                          &\col{Compiler}&\colL{Interpreter}&\col{Propnet}&\col{Prolog}&\col{Prover}\\\hline
Amazons                   &       43,263 &          23,371 &         242 &         13 &          2 \\\hline          
Amazons (split)           &    1,176,877 &         288,201 &      50,764 &      3,721 &        366 \\\hline
Arimaa                    &           18 &               2 & \coltt{\emph{not available}}                           \\\hline
Breakthrough              &    1,285,315 &         371,164 &     175,888 &      4,691 &        783 \\\hline
Chess                     &      148,248 &          16,708 &      14,467 &        120 &         64 \\\hline
Chess (without check)     &      422,992 &          87,281 &      23,625 &      2,702 &        213 \\\hline
Chinese checkers 6-players&      282,821 &          53,186 & \coltt{\emph{not available}}                           \\\hline
Connect four              &    3,993,392 &       1,024,000 &     985,643 &     10,606 &      1,829 \\\hline
Double chess              &       22,095 &           2,249 & \coltt{\emph{not available}}                           \\\hline
English checkers          &    1,312,813 &         233,519 &     225,143 &      6,359 &        873 \\\hline
Go                        &       66,803 &          17,565 & \coltt{\emph{not available}}                           \\\hline
Gomoku                    &      267,836 &         176,417 &     122,424 &        480 &         95 \\\hline
Hex (11x11)               &      735,901 &         329,984 & \coltt{\emph{not available}}                           \\\hline
Hex (9x9)                 &    1,048,963 &         444,243 &      35,682 &      1,263 &        275 \\\hline
International checkers    &      208,749 &          26,754 & \coltt{\emph{not available}}                           \\\hline
Paper soccer              &       20,195 &           2,742 & \coltt{\emph{not available}}                           \\\hline
Reversi                   &      526,580 &          93,601 &      22,994 &        0\fa&         37 \\\hline
The mill game             &      487,958 &          64,003 & \coltt{\emph{not available}}                           \\\hline
Tic-tac-toe               &    4,098,300 &       1,725,347 &     774,852 &     26,935 &      5,813 \\\hline
\end{tabular}\end{center}
\begin{flushleft}
\fa For reversi, Prolog could not complete even one playout within a reasonable time.\\
\end{flushleft}\end{table*}

\begin{table*}[!htb]\centering\small\renewcommand{\arraystretch}{1.2}
\newcommand{\rowt}[1]{\multirow{2}{*}{#1}}
\newcommand{\rowtt}[1]{\multirow{3}{*}{#1}}
\newcommand{\col}[1]{\multicolumn{1}{c|}{#1}}
\newcommand{\colL}[1]{\multicolumn{1}{c||}{#1}}
\newcommand{\colc}[1]{\multicolumn{1}{|c|}{#1}}
\newcommand{\colcL}[1]{\multicolumn{1}{|c||}{#1}}
\newcommand{\colt}[1]{\multicolumn{2}{c|}{#1}}
\newcommand{\coltL}[1]{\multicolumn{2}{c||}{#1}}
\newcommand{\coltt}[1]{\multicolumn{3}{c|}{#1}}
\newcommand{\colttt}[1]{\multicolumn{4}{c|}{#1}}
\caption{Comparison of the resources used. The initialization times for RBG compiler and GDL Propnet are given in seconds. The maximal memory usage is given in MB of RAM.
The RBG interpreter, Prolog, and Prover do not require any significant time for initialization, and the memory for the two latter also should not be an issue.
In the case of the RBG compiler, the higher amount of memory is required only for the compilation (which has a large constant overhead), whereas during the test it remains small (do not even exceed $60$ MB, which is met for arimaa).}\label{tab:experiments-other}
\begin{tabular}{|l||r|r||r|r|r|}\hline
\colcL{\rowtt{\bf Game}}    &\coltL{\bf Initialization time}&\coltt{\bf Memory}                            \\\cline{2-6}
                           &\col{\bf RBG} &\colL{\bf GDL}&\colt{\bf RBG}                   &\col{\bf GDL} \\\cline{2-6}
                           &\col{Compiler}&\colL{Propnet}&\col{Compiler} &\col{Interpreter}&\col{Propnet} \\\hline
Amazons                    &         4.76 &      194.41 &           128 &               8 &        3,043 \\\hline          
Amazons (split)            &         3.09 &        8.71 &           120 &               8 &          340 \\\hline
Arimaa                     &        73.04 & \colL{\emph{not available}}   &           916 &             369 & \col{\emph{not available}}    \\\hline
Breakthrough               &         2.03 &        1.74 &            97 &               6 &          150 \\\hline
Chess                      &        11.65 &       13.17 &           282 &              35 &          349 \\\hline
Chess (without check)      &         5.74 &        6.70 &           163 &              13 &          252 \\\hline
Chinese checkers 6-players &         5.40 & \colL{\emph{not available}}   &           140 &              17 & \col{\emph{not available}}    \\\hline
Connect four               &         2.05 &        0.64 &           103 &               5 &          134 \\\hline
Double chess               &        13.36 & \colL{\emph{not available}}   &           313 &              70 & \col{\emph{not available}}    \\\hline
English checkers           &         3.78 &        2.03 &           131 &              16 &          141 \\\hline
Go                         &         6.72 & \colL{\emph{not available}}   &           195 &             390 & \col{\emph{not available}}    \\\hline
Gomoku                     &         2.78 &        6.66 &           124 &              18 &          270 \\\hline
Hex (11x11)                &         2.40 & \colL{\emph{not available}}   &           106 &               8 & \col{\emph{not available}}    \\\hline
Hex (9x9)                  &         2.25 &        4.45 &           103 &               7 &          255 \\\hline
International checkers     &         8.17 & \colL{\emph{not available}}   &           197 &              35 & \col{\emph{not available}}    \\\hline
Paper soccer               &        11.81 & \colL{\emph{not available}}   &           298 &             168 & \col{\emph{not available}}    \\\hline
Reversi                    &         6.46 &        3.12 &           187 &              12 &          182 \\\hline
The mill game              &        15.62 & \colL{\emph{not available}}   &           274 &              11 & \col{\emph{not available}}    \\\hline
Tic-tac-toe                &         1.91 &        0.44 &            95 &               5 &          131 \\\hline
\end{tabular}
\end{table*}

%%%%%%%%%%%%%%%%%%%%%%%%%%%%%%%%%%%%%%%%%%%%%%%%%%%%%%%%%%%%
\newpage\null\newpage
\section{Proofs}

\setcounter{theorem}{3}

\begin{propositionnum}[\ref{pro:unique-derivative}]
For every indexed action $a_i$, the non-empty derivatives of $\mathcal{L}(\hat{E})$ by $u' a_i$ are the same for every word $u'$.
\end{propositionnum}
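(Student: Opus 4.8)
The plan is to exploit that in $\hat{E}$ each indexed action occurs exactly once, and to argue through the position (Glushkov) automaton of $\hat{E}$. First I would set up this automaton $\mathcal{A}$ over the indexed alphabet: its states are the indices (positions) appearing in $\hat{E}$ together with an initial state $q_0$; there is a transition on $a_i$ from $q_0$ whenever $i$ may begin a word of $\mathcal{L}(\hat{E})$, a transition on $a_i$ from $j$ whenever $i$ may immediately follow $j$ in some word, and the accepting states are those indices that may end a word. By the standard correctness of this construction, $\mathcal{A}$ recognizes $\mathcal{L}(\hat{E})$.

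The key observation, and the place where uniqueness of indices is used, is that every transition reading $a_i$ must lead to state $i$, and conversely the only transitions entering $i$ read $a_i$ (since the position $i$ carries a unique label). Consequently $\mathcal{A}$ is deterministic over the indexed alphabet --- from any state, reading $a_i$ either moves to $i$ or is undefined --- and the state reached after reading any word whose last symbol is $a_i$ is exactly $i$, independently of the preceding symbols.

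I would then invoke the correspondence between derivatives and right-languages of states: the derivative of $\mathcal{L}(\hat{E})$ by a word $w$ equals the set of words accepted from the state that $w$ reaches in $\mathcal{A}$, or $\emptyset$ if $w$ labels no path out of $q_0$. Applied to $w = u' a_i$: whenever this derivative is non-empty, the path necessarily exists and, by the observation above, ends in state $i$, so the derivative equals the right-language of $i$. Since this right-language does not depend on $u'$, all non-empty derivatives by words ending in $a_i$ coincide, giving the statement; this common language is what the paper denotes $\mathcal{L}(\hat{E})_i$.

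The step requiring the most care is the clean statement and use of the Glushkov construction --- that $\mathcal{A}$ indeed recognizes $\mathcal{L}(\hat{E})$ and that derivatives match right-languages --- together with the correct treatment of the non-empty hypothesis, distinguishing the case in which $u' a_i$ labels no path from $q_0$ (empty derivative) from the case in which it reaches $i$ but the right-language of $i$ is itself empty. A more self-contained alternative would be structural induction on $E$ tracking how the first/follow sets compose, but that is considerably more laborious, whereas the automaton viewpoint makes the uniqueness-of-indices argument transparent.
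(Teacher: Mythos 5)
Your proof is correct and follows essentially the same route as the paper: both arguments build an automaton for $\mathcal{L}(\hat{E})$ in which, thanks to the uniqueness of each indexed action, every transition reading $a_i$ enters a single fixed state, and then identify the derivative by $u'a_i$ with that state's right-language. The only difference is cosmetic --- you use the Glushkov (position) automaton where the paper uses Thompson's construction with $\varepsilon$-transitions --- and your handling of the non-emptiness hypothesis is, if anything, slightly more careful than the paper's.
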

\begin{proof}
To see this, we can use the Thompson's construction of the NFA with $\varepsilon$-transitions that recognizes $\mathcal{L}(\hat{E})$.
In this construction, for every indexed action $a_i$ there exists a unique edge labeled by $a_i$, and hence there is a unique state $q_i$ that has this incoming edge.
Thus, for an arbitrary word $u'$, the derivative of $\mathcal{L}(\hat{E})$ by $u' a_i$ is either empty, when $u' a_i \notin \mathrm{pref}(\mathcal{L}(\hat{E}))$, or it is the set of all words accepted by the NFA having the initial state $q_i$.
The latter set is completely determined by $a_i$.
\end{proof}

\begin{theoremnum}[\ref{thm:rbg-is-universal}]
RBG is universal for the class of finite deterministic games with full information.
\end{theoremnum}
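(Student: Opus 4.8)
The plan is to follow the extensive-form characterization of finite perfect-information games and show that any such game tree can be reproduced, node for node, inside $\mit{Rules}$. Recall that a finite deterministic game with full information (with no simultaneous moves) is given by a finite tree in which every internal node is owned by one player who chooses among the outgoing edges, and every leaf carries a payoff vector. I would build an RBG description whose $\mit{Board}$ and $\mit{Pieces}$ are trivial -- a single vertex and a single (empty) piece suffice, since the state will be tracked entirely by the rules index -- while $\mit{Variables}$ consists of the players' score variables with $\mit{Bounds}$ large enough to hold every payoff that occurs. The whole content of the reduction is then the regular expression $\mit{Rules}$.

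I would define $\mit{Rules}$ by recursion on the tree. Writing $\sigma(n)$ for the player owning node $n$ when $n$ is internal and $\sigma(n)=\mathfrak{Keeper}$ when $n$ is a leaf, I associate to each node $n$ an expression $R_n$, to be read once $\sigma(n)$ has been handed control. For an internal node $n$ with children $c_1,\dots,c_k$ I set $R_n = ({\to}\sigma(c_1))\,R_{c_1} + \cdots + ({\to}\sigma(c_k))\,R_{c_k}$, so that the owner's move is exactly one switch, transferring control to the player (or keeper) of the chosen child. For a leaf $n$ with payoffs $\pi_1(n),\dots,\pi_m(n)$ I set $R_n = [\$\,p_1=\pi_1(n)]\cdots[\$\,p_m=\pi_m(n)]\,{\doubleto}\,\emptyset$, exactly as in the winning-condition example: the keeper writes the scores, performs a final switch, and then faces the always-illegal action $\emptyset$, which terminates the play. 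Finally $\mit{Rules} = ({\to}\sigma(\mathrm{root}))\,R_{\mathrm{root}}$, so that the initial keeper's forced first move hands control to the root owner.

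The correctness argument has three ingredients. First, properness: the tree is finite and $\mit{Rules}$ uses no Kleene star and no patterns, so $\mathcal{L}(\mit{Rules})$ is finite and condition~(1) holds trivially; and the keeper is deterministic -- his only moves are the forced opening switch and the fixed score-setting sequence at each leaf -- so every reachable state has a unique keeper completion, giving condition~(2). Second, the bijection between game-tree edges and RBG moves: at an internal node the owner's legal move sequences are precisely the $k$ single switches that open the branches, and, crucially, although these switches may be the \emph{same} action (e.g.\ several children owned by the same player, or several leaves), in $\hat{\mit{Rules}}$ their occurrences are uniquely indexed (the device underlying Proposition~\ref{pro:unique-derivative}), so they are distinct moves leading to game states with distinct rules indices and no two children are accidentally identified. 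Third, an induction on the tree matching RBG nodes (keeper completions of non-keeper states) to game-tree nodes shows the two trees are isomorphic, with the same owners and with the leaf payoffs stored in the score variables.

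The main obstacle is the second ingredient: making sure that the RBG game tree, whose nodes are game states (a semi-state together with a rules index) and whose keeper moves are compressed away by keeper completion, is a faithful, edge-for-edge copy of the abstract tree. This is where the indexing of $\hat{\mit{Rules}}$ does the real work -- it separates branches that would otherwise collapse -- and where I must check that keeper completion never merges two distinct leaves (their terminating switches carry different indices, hence yield distinct terminal states). The remaining verifications, that an RBG play corresponds to a root-to-leaf path and that the reduction has size linear in the tree, are routine. I would also remark that the converse inclusion is immediate -- every proper RBG description is itself a finite, deterministic, turn-based perfect-information game by condition~(1) -- so the class is captured exactly.
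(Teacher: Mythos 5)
Your proposal is correct and follows essentially the same route as the paper's proof: a trivial one-vertex board, score variables holding the (scaled) payoffs, and a regular expression built by recursion on the game tree in which each internal node becomes a sum of branches opened by uniquely-indexed switches and each leaf becomes a sequence of score assignments followed by a keeper switch into the always-illegal $\emptyset$. Your additional checks (properness, the role of indexing in keeping identical switch actions on different branches distinct) make explicit what the paper leaves implicit, but the construction is the same.
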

\begin{proof}
Following \cite{Rasmusen1994Games}, by a finite deterministic game with full information we mean a tuple $(n,T,s_0,\iota,\mathit{out})$, where $n\in \mathbb{N}$ indicates the number of players, $T = (V,E)$ is a finite tree rooted at a vertex $s_0\in V$ and with $V_\mathit{ter} \subseteq V$ as its set of leaves, $\iota\colon V \setminus V_\mathit{ter} \to \{1,\hdots,n\}$ is a function indicating which player has control in the given node, and $\mathit{out}\colon V_\mathit{ter} \to \mathbb{Q}_{\ge 0}^{n}$ is a function defining all players' scores.

We can easily transform the above structure into RBG-conforming elements that define exactly the tree $T$ (except the scores are scaled).
However, this construction does not aim to be optimal, as its size is linear in the size of $T$.

Given a game $(n,T,s_0,\iota,\mathit{out})$, we create an abstract RBG description defining exactly the same game.

We set $\mathit{Players} = \{\mathit{player}_1,\hdots,\mathit{player}_n\}$.
Our description will not need any board nor pieces, thus we can define a trivial one-vertex board.
The only variables we use are the players' scores.
To express rational scores assigned by $\mathit{out}$, it is possible to scale them by some integer $q$, so that all of them become natural numbers. The variable $\mathit{player}_i$ can now be bounded by $\max_{v\in V_\mathit{ter}} \{q\cdot \mathit{out}(v)_i\}$, where $\mathit{out}(v)_i$ is the $i$-th element in the tuple $\mathit{out}(v)$.

We construct the RBG rules inductively over the depth of $T=(V,E)$.
We denote the rules for a game rooted at a vertex $v \in V$ by $\mit{rules}(v)$.
If $v \in V_\mathit{ter}$, then it has assigned some outcome tuple $q \mathit{out}(s_0)=(r_1,\hdots,r_n)$, and represents a game where the players have no moves to perform. We only have to set the outcome and end the play using the keeper:
\begin{align*}
\mit{rules}(v) &= [\$\,\mit{player}_1{=}r_1] \hdots\ [\$\,\mit{player}_n{=}r_n]\,{\doubleto}\,\emptyset.
\end{align*}
If $v$ is not a leaf, then it has some children $v_1,\hdots,v_m$ along with their RBG rules descriptions $\mit{rules}(v_1),\hdots,\mit{rules}(v_m)$. The player performing a move at $v$ is $\iota(v)=i$.
Then we define:
\begin{align*}
\mit{rules}(v) &= {\to}\mit{player}_i\ (\mit{rules}(v_1) + \hdots + \mit{rules}(v_n))
\end{align*}

It is easy to see that every play of the RBG game corresponds to a play (path) of the given game rooted at $v$ and vice versa. The move of $\mit{player}_i$ boils down to the choice of the next vertex, one of the children of $v$.

The rules of the whole game are $\mit{rules}(s_0)$. The correctness of the construction is evident after noting that the rules expression tree looks like $T$ transformed into RBG-conforming elements.
\end{proof}

\subsubsection{Strong straightness}

 As mentioned, the straightness of a description is difficult to compute. To determine it exactly, we need to compute the whole game tree. However, in many cases the straightness is bounded by the rules and a reasonable upper bound can be computed easily. Such descriptions form a special subclass, where the algorithm from Theorem~7 works in polynomial time if the rules are fixed and the board is given.

We formally extend the definition of the application language to languages independent on the semi-state.
For a language $L$, the application language $\mathrm{app}(L)$ is defined recursively by:
$$\mathrm{app}(L) = \mathrm{pref}(L) \cup \{uv \mid u\{? M\}\text{ or }u\{! M\}\text{ is in }\mathrm{pref}(L),\text{ and }v \in \mathrm{app}(M)\}.$$
The only difference with the application language $\mathrm{app}(S,L)$ is that we do not check if the actions are valid, thus it does not depend on $S$.
For example, if $L$ is defined by
$$\mathit{up}\,\{?\,[a]\,\{!\,[b]\}\,[c]\}\,{\doubleto}\,[d]\,{\doubleto},$$
then we have
$$\mathrm{app}(L) = \big\{\mathit{up}\,\{?\,[a]\,\{!\,[b]\}\,[c]\}\,{\doubleto}\,[d]\,{\doubleto}, \mathit{up}\,[a]\,\{!\,[b]\}\,[c], \mathit{up}\,[a]\,[b]\big\}.$$
Then $\mathrm{mseq}(\mathrm{app}(L))$ consists of all factors of the words from $$\big\{\mathit{up}\,\{?\,[a]\,\{!\,[b]\}\,[c]\}, [d], \mathit{up}\,[a]\,\{!\,[b]\}\,[c], \mathit{up}\,[a]\,[b]\big\}.$$

An RBG description is \emph{strongly $k$-straight} if the straightness of $\mathrm{mseq}(\mathrm{app}(\mathcal{L}(\mit{Rules})))$ is at most $k$.
The strong straightness (the smallest $k$ such that the description is strongly $k$-straight) is easily computable in linear time.

Table~\ref{tab:strong-straightness} shows the strong straightness for our implemented games, together with our estimation of the real straightness.

\begin{table*}[htb]\centering\small\renewcommand{\arraystretch}{1.3}
\newcommand{\rowt}[1]{\multirow{2}{*}{#1}}
\newcommand{\rowtt}[1]{\multirow{3}{*}{#1}}
\newcommand{\col}[1]{\multicolumn{1}{c|}{#1}}
\newcommand{\colc}[1]{\multicolumn{1}{|c|}{#1}}
\newcommand{\colt}[1]{\multicolumn{2}{c|}{#1}}
\newcommand{\coltt}[1]{\multicolumn{3}{c|}{#1}}
\caption{The strong straightness of RBG game descriptions.}\label{tab:strong-straightness}
\begin{tabular}{|l|r|r|}\hline
\colc{\bf Game}            & \col{Straightness} & \col{Strong straightness} \\\hline
Arimaa                     &           $\le 52$ & $\infty$                  \\\hline
Amazons                    &                  3 & 3                   \\\hline       
Amazons (split)            &                  3 & 3                   \\\hline
Breakthrough               &                  3 & 3                   \\\hline        
Chess                      &                  6 & 6                   \\\hline
Chess (without check)      &                  6 & 6                   \\\hline
Chinese checkers 6-players &                  8 & 8                   \\\hline
Connect4                   &                  2 & 2                   \\\hline
Double chess               &                  5 & 5                   \\\hline
English checkers           &                 14 & $\infty$            \\\hline
Go (19x19)                 &                  2 & 2                   \\\hline
Gomoku (15x15)             &                  3 & 3                   \\\hline
Hex (9x9)                  &                  3 & 3                   \\\hline
Hex (11x11)                &                  3 & 3                   \\\hline
International checkers     &           $\le 44$ & $\infty$            \\\hline
Paper soccer              &        $\le 2,100$ & $\infty$            \\\hline
Reversi                    &                  7 & $\infty$            \\\hline
The mill game              &                  3 & 3                   \\\hline 
Tic-tac-toe                &                  3 & 3                   \\\hline
\end{tabular}
\end{table*}

\begin{theorem}\label{thm:strongstraightness}
Given a regular expression $E$, the straightness of $\mathrm{mseq}(\mathrm{app}(\mathcal{L}(E)))$ is computable in $\O(|E|)$ time and, if it is finite, it is at most $|E|$, where $|E|$ is the length of the text representation of $E$.
\end{theorem}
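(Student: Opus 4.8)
The plan is to reduce the statement to a weighted maximum-factor computation over the syntax tree of $E$ and to solve it by a bottom-up dynamic program with $\O(1)$ work per node.

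First I would recast straightness as a max-segment problem. Since a pattern's inner expression $M$ contains no switch, every word of $\mathrm{app}(\mathcal{L}(M))$ is switch-free; and by unfolding the recursive definition of $\mathrm{app}$, an applicable word has the form $u_0 u_1\cdots u_h$ where only the top-level prefix $u_0$ may contain switches and each deeper $u_i$ comes from a pattern nested at depth $i$. Consequently the switches of an applicable word lie entirely in $u_0$, so its switch-free factors are (i) the internal segments of $u_0$ between two switches, and (ii) the final segment of $u_0$ (after its last switch) extended by the whole switch-free tail $u_1\cdots u_h$ obtained by diving. Counting modifiers — which in a switch-free factor means counting only offs and assignments — this shows that the straightness of $\mathrm{mseq}(\mathrm{app}(\mathcal{L}(E)))$ equals the maximum weight of a switch-separated factor of a word of $\mathcal{L}(E)$, where each off/assignment has weight $1$, switches are separators, and at every pattern action $\{?M\}$ or $\{!M\}$ one may optionally terminate the current open factor and add a \emph{dive bonus} equal to the straightness of $\mathrm{mseq}(\mathrm{app}(\mathcal{L}(M)))$, i.e.\ the same quantity computed recursively on the strictly smaller, switch-free expression $M$.

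Next I would set up the dynamic program. For each subexpression $F$ I would maintain, over the words of $\mathcal{L}(F)$, the four standard max-segment quantities: $\mathrm{full}(F)$ (best weight of a whole switch-free word), $\mathrm{pre}(F)$ and $\mathrm{suf}(F)$ (best leading and trailing segment among words that contain a switch), and $\mathrm{mid}(F)$ (best complete internal segment), together with two dive quantities $\mathrm{dive}(F)$ and $\mathrm{dive}_0(F)$ recording the best dive candidate in general and subject to the pattern lying in the leading segment; the derived value $\max(\mathrm{suf}(F),\mathrm{full}(F))$ captures the best open trailing segment. The combine rules are coordinatewise maxima for union, and for concatenation they are the usual boundary-merge formulas, e.g.\ $\mathrm{pre}(F_1F_2)=\max(\mathrm{pre}(F_1),\,\mathrm{full}(F_1)+\mathrm{pre}(F_2))$, $\mathrm{mid}(F_1F_2)=\max(\mathrm{mid}(F_1),\mathrm{mid}(F_2),\mathrm{suf}(F_1)+\mathrm{pre}(F_2))$, while the dive extension across the boundary is $\max(\mathrm{suf}(F_1),\mathrm{full}(F_1))+\mathrm{dive}_0(F_2)$. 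The star $G^*$ is the only place where unboundedness can appear: I would prove that a single switch-free factor can be pumped (hence the straightness is infinite) if and only if $\mathrm{full}(G)\ge 1$, and that otherwise all quantities of $G^*$ are finite and given by simple formulas such as $\mathrm{mid}(G^*)=\max(\mathrm{mid}(G),\mathrm{suf}(G)+\mathrm{pre}(G))$. Pattern leaves are handled by first recursing on $M$ to obtain its straightness and using it as the dive bonus. Since each action and operator node is visited once (the inner expression of a pattern is a disjoint part of the text) and each node costs $\O(1)$ arithmetic, the whole computation runs in $\O(|E|)$ time, giving the first claim; the answer for $E$ is $\max(\mathrm{full},\mathrm{pre},\mathrm{suf},\mathrm{mid},\mathrm{dive})$ evaluated at the root.

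Finally, for the bound I would argue that finiteness forces each switch-free factor to use every modifier-text-occurrence at most once. If some occurrence appeared at least twice in one switch-free factor, the repetition would be produced by a star loop that returns to the same position without crossing a switch while gaining at least one modifier — exactly the condition $\mathrm{full}(G)\ge1$ detected above — and pumping this loop would make the straightness infinite. Hence, when the straightness is finite, each top-level off or assignment, and (through disjoint pattern subexpressions reached by diving) each off or assignment occurring anywhere inside $E$, contributes at most once to the optimal factor, so the straightness is at most the number of off/assignment occurrences in the text of $E$, which is at most $|E|$. The main obstacle I expect is not the standard max-segment bookkeeping but getting the dive combine rules exactly right — in particular correctly letting an open trailing segment of one factor merge across a concatenation or star boundary into a pattern dive — and cleanly establishing the pump/no-pump dichotomy that both drives the $\infty$-detection and underlies the $|E|$ bound.
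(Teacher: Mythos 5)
Your proposal is correct and follows essentially the same route as the paper: a bottom-up $\O(1)$-per-node dynamic program over the syntax tree maintaining maximum modifier counts for switch-free prefixes, suffixes, factors, and whole words (the paper's $\mit{MPref}$, $\mit{MSuff}$, $\mit{MFact}$, $\mit{MWord}$), with the star case detecting infiniteness exactly when a switch-free word of the starred subexpression contains a modifier, and patterns handled by recursing on the switch-free inner expression. Your separate ``dive'' quantities are just a repackaging of the paper's choice to define $\mit{MPref}$ and $\mit{MFact}$ directly over the application language, and your pumping argument for the $|E|$ bound matches the paper's (brief) inductive claim.
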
\label{thm:computing-straightness}
\begin{proof}
The strong straightness can be easily computed recursively for every subexpression of a given regular expression.
We do this by splitting the problem into four subproblems, which together make the final result:
We count the maximum number of modifiers in a word from an application language before the first switch, after the last switch, between two switches with other actions between, and in the words without switches.
The values for an expression are computed using these values recursively for the subexpressions.

For a language $L$, recall $\mathrm{pref}(L)$ and let $\mathrm{fact}(L)$ and $\mathrm{suff}(L)$ be the sets of all prefixes, factors (substrings), and suffixes of the words from $L$, respectively.

We will construct a recursive algorithm which, given an expression $E$, computes the straightness of $\mathrm{mseq}(\mathrm{app}(\mathcal{L}(E)))$.
Note that $\mathrm{app}(\mathcal{L}(E_1 E_2))$ is the language $\mathrm{app}(\mathcal{L}(E_1)) \cup \mathcal{L}(E_1)\cdot \mathrm{app}(\mathcal{L}(E_2))$.
A word with a maximum number of modifiers without switches can be taken from $\mathrm{app}(\mathcal{L}(E_1))$ or from $\mathcal{L}(E_1)\cdot \mathrm{app}(\mathcal{L}(E_2))$.
In the second case, it has the form $w=uv$, where $u$ is a suffix of a word from $\mathcal{L}(E_1)$ and $v$ is from $\mathrm{app}(\mathcal{L}(E_2))$.
We will need four auxiliary functions $\mit{MSuff}$, $\mit{MPref}$, $\mit{MFact}$, and $\mit{MWord}$, that return the maximum number of modifiers in the words from some sets defined by a given regular expression.
Their values are in $\mathbb{N}\cup\{\perp,\infty\}$, where the $\infty$ means that maximum does not exist, and $\perp$ means that the set of which maximum is to be taken is empty.
We define:
\begin{itemize}
\item $\mit{MSuff}(E)$ -- the maximum number of modifiers in the words from $\mathrm{suff}(\mathcal{L}(E)))$ that do not contain any switches.
For example, $\mit{MSuff}\big([a]\{b\}\{? [c][d]\}[e]{\doubleto}[f](\{g\}{+}[h])[i]\big) = 3$, because we have the suffix $\doubleto[f][h][i]$, and no suffix with a larger number modifiers is present.

\item $\mit{MPref}(E)$ -- the maximum number of modifiers in the words from $\mathrm{app}(\mathcal{L}(E))) = \mathrm{pref}(\mathrm{app}(\mathcal{L}(E)))$ that do not contain any switches.
Taking the above example, $\mit{MPref}\big([a]\{b\}\{? [c][d]\}[e]{\doubleto}[f](\{g\}{+}[h])[i]\big) = 4$, because of the prefix $[a]\{b\}[c][d]$ that is present in the application language.

\item $\mit{MFact}(E)$ -- the maximum number of modifiers in the words from $\mathrm{fact}(\mathrm{app}(\mathcal{L}(E)))$ that do not contain any switches.
For example, $\mit{MFact}\big([a]\{b\}\{? [c][d]\}[e]{\doubleto}[f](\{g\}{+}[h])[i]\big) = 4$, which is the same as for $\mit{MPref}$, but $\mit{MFact}\big([a]{\doubleto}[b][c][d]{\doubleto}[e][f]\big)=3$, since there is the factor $[b][c][d]$.

\item $\mit{MWord}(E)$ -- the maximum number of modifiers in the words from $\mathcal{L}(E)$ that do not contain any switches.
Taking the above example, $\mit{MWord}\big([a]\{b\}\{? [c][d]\}[e]{\doubleto}[f](\{g\}{+}[h])[i]\big)=\perp$, because every word from the language has a switch. But $\mit{MWord}\big([a][b]({\doubleto}{+}[c])[d]\big)=4$, because of the word $[a][b][c][d]$.
\end{itemize}
The values $\mit{MSuff}(E)$, $\mit{MPref}(E)$, and $\mit{MWord}(E)$ are respectively the maximum number of modifiers in the words that can begin, end, or simultaneously begin and end a word with the maximum number of modifiers when $E$ is concatenated with the next subexpression.
Obviously, we have $\mit{MWord}(E) \le \mit{MPref}(E) \le \mit{MFact}(E)$ and $\mit{MWord}(E) \le \mit{MSuff}(E) \le \mit{MFact}(E)$.
The value $\mit{MFact}(E)$ is equal to the straightness of $\mathrm{mseq}(\mathrm{app}(\mathcal{L}(E)))$, thus is our result.

Now, we will give formulas for computing all these functions for any expression $E$. To simplify evaluation of maximums and additions, we assume that $\infty > n$, $\perp < n$, $\infty + n = \infty$, $\perp + n = \perp$ and $\infty + \perp = \perp$ for every $n\in\mathbb{N}$.

If $E$ is a modifier action other than a switch:
\begin{align*}
\mit{MSuff}(E) & =\ 1; & \mit{MPref}(E) & =\ 1; \\
\mit{MFact}(E) & =\ 1; & \mit{MWord}(E) & =\ 1.
\end{align*}
If $E$ is a non-modifier action other than a pattern:
\begin{align*}
\mit{MSuff}(E) & =\ 0; & \mit{MPref(E)} & =\ 0; \\
\mit{MFact}(E) & =\ 0; & \mit{MWord(E)} & =\ 0.
\end{align*}
If $E$ is a switch:
\begin{align*}
\mit{MSuff}(E) & =\ 0; & \mit{MPref}(E) & =\ 0; \\
\mit{MFact}(E) & =\ 0; & \mit{MWord}(E) & =\ \perp.
\end{align*}
If $E$ is a pattern $\{? F\}$ or $\{! F\}$, then:
\begin{align*}
\mit{MSuff}(E) &\ = 0; & \mit{MPref}(E) & =\ \mit{MPref}(F); \\
\mit{MFact}(E) &\ =\ \mit{MPref}(F); & \mit{MWord}(E) & =\ 0.
\end{align*}
The three remaining cases cover all the operations. While having many possibilities to cover, they simply represent the behavior of each of the operators. For example, to calculate $\mit{MSuff}(E^*)$ we must take into account all sequences beginning with a switch inside $E$ and see if we can extend them by words from $\mathcal{L}(E)$ without switches; if there is such a word with a positive number of modifiers (i.e., $\mit{MWord}(E) > 0$), then the number of modifiers in the words from $\mathrm{suff}(\mathcal{L}(E^*)))$ is unbounded.
\begin{align*}
\mit{MSuff}(E_1+E_2)&=\max(\mit{MSuff}(E_1),\mit{MSuff}(E_2));\\
\mit{MPref}(E_1+E_2)&=\max(\mit{MPref}(E_1),\mit{MPref}(E_2));\\
\mit{MFact}(E_1+E_2)&=\max(\mit{MFact}(E_1),\mit{MFact}(E_2));\\
\mit{MWord}(E_1+E_2)&=\max(\mit{MWord}(E_1),\mit{MWord}(E_2)).\\
\end{align*}
\begin{align*}
\mit{MSuff}(E_1 E_2)&=\max(\mit{MSuff}(E_1)+\mit{MWord}(E_2),\mit{MSuff}(E_2));\\
\mit{MPref}(E_1 E_2)&=\max(\mit{MPref}(E_1),\mit{MWord}(E_1)+\mit{MPref}(E_2));\\
\mit{MFact}(E_1 E_2)&=\max(\mit{MFact}(E_1),\mit{MSuff}(E_1)+\mit{MPref}(E_2),\mit{MFact}(E_2));\\
\mit{MWord}(E_1 E_2)&=\mit{MWord}(E_1)+\mit{MWord}(E_2).\\
\end{align*}
\begin{align*}
\mit{MSuff}(E^*)&=\begin{cases}
\infty \text{, if } \mit{MWord}(E)>0;\\
\mit{MSuff}(E) \text{, if } \mit{MWord}(E)=0 \text{ or } \mit{MWord}(E)=\perp;
\end{cases}\\
\mit{MPref}(E^*)&=\begin{cases}
\infty \text{, if } \mit{MWord}(E)>0;\\
\mit{MPref}(E) \text{, if } \mit{MWord}(E)=0 \text{ or } \mit{MWord}(E)=\perp;
\end{cases}\\
\mit{MFact}(E^*)&=\begin{cases}
\infty \text{, if } \mit{MWord}(E)>0;\\
\max(\mit{MSuff}(E)+\mit{MPref}(E),\mit{MFact}(E)) \text{, if }  \mit{MWord}(E)=0 \text{ or } \mit{MWord}(E)=\perp;
\end{cases}\\
\mit{MWord}(E^*)&=\begin{cases}
\infty \text{, if } \mit{MWord}(E)>0\\
0 \text{, if } \mit{MWord}(E)=0 \text{ or } \mit{MWord}(E)=\perp.
\end{cases}
\end{align*}

Having these formulas, we can easily see by induction that the strong straightness $(\mit{MFact}(E))$ indeed cannot exceed $|E|$, if only it is finite.
Furthermore, since every formula is computed in constant time provided the results for the subexpressions, the strong straightness can be computed in linear time.
\end{proof}

The following result provides a standard reasoning algorithm for RBG.
This is the algorithm implemented in our package.

\begin{theorem}\label{thm:computing-moves1}
Given a $k$-straight ($k \ge 1$) RBG description $(\mathcal{R},\mathcal{S})$ with pattern depth at most $d$ and a reachable game state $I_r$, all legal moves can be found in $\O((|\mathcal{R}|\cdot|\mathcal{S}|)^{k+1}\cdot|\mathcal{S}|^{d})$ time and $\O(k(|\mathcal{R}|\cdot|\mathcal{S}|))$ space.
\end{theorem}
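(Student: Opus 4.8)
The plan is to exhibit and analyze the natural backtracking reasoner and read both bounds off its recursion structure. The procedure walks the indexed automaton of $\hat{\mit{Rules}}$ obtained by Thompson's construction, maintaining a current automaton state together with the current semi-state. By Proposition~\ref{pro:unique-derivative} the rules index $r$ of the input $I_r$ names a unique automaton state $q_r$ whose accepted language is exactly the derivative $\mathcal{L}(\hat{\mit{Rules}})_r$, so the search begins at the configuration $(q_r,I)$. Along each outgoing edge we apply the corresponding action to the semi-state: shifts move the position, ons and comparisons are validity tests that prune dead branches, and offs, assignments, and switches are the modifiers that alter $P$, $V$, or the current player. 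Whenever a switch edge is taken we have completed one legal move, recorded as the sequence of modifier index/position pairs applied on the outer walk since the start (pattern-internal modifications, discussed below, are rolled back and never recorded), and we output it; whenever an action is invalid for the current semi-state we backtrack. Correctness is then immediate from the earlier definitions: the automaton state is the rules index, an edge is followed iff the corresponding indexed action lies in $\mathrm{pref}(\mathcal{L}(\hat{\mit{Rules}})_r)$ and is valid for the semi-state, which is exactly legality, and a switch-terminated modifier sequence is exactly a legal move.

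The central structural observation is that between two consecutive modifiers the semi-state is frozen: only the position changes, via shifts, while ons, comparisons, and patterns merely test it. Hence each maximal modifier-free stretch -- a \emph{segment} -- is a graph-reachability problem over the configurations $\{(\text{automaton state},\text{position})\}$, of which there are $\O(|\mathcal{R}|\cdot|\mathcal{S}|)$ since the automaton has $\O(|\mathcal{R}|)$ states and the board has $\O(|\mathcal{S}|)$ vertices. Marking visited configurations, one segment is explored in $\O(|\mathcal{R}|\cdot|\mathcal{S}|)$ time, and this is sound for enumeration because any two paths reaching the same configuration have identical continuations and, carrying no modifiers, contribute the same move prefix. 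From a segment we branch on each reachable modifier edge; a modifier edge together with a position is one of at most $\O(|\mathcal{R}|\cdot|\mathcal{S}|)$ possibilities, so each segment spawns $\O(|\mathcal{R}|\cdot|\mathcal{S}|)$ recursive calls. Now $k$-straightness enters: the straightness of $\mathrm{mseq}(\mathrm{app}(\mit{InitialState},\mathcal{L}(\mit{Rules})))$ is at most $k$, so no switch-free factor of an applicable word carries more than $k$ modifiers, and in particular at most $k$ offs/assignments precede the terminating switch. The modifier recursion therefore has depth at most $k$, giving $\O((|\mathcal{R}|\cdot|\mathcal{S}|)^k)$ recursion nodes with $\O(|\mathcal{R}|\cdot|\mathcal{S}|)$ segment work each, i.e.\ $\O((|\mathcal{R}|\cdot|\mathcal{S}|)^{k+1})$ time before patterns.

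Patterns are handled by a nested copy of the same search run for existence: to evaluate $\{?\,M\}$ (resp.\ $\{!\,M\}$) at the current configuration we launch the procedure on $M$ from the current semi-state with fresh index $0$, return \texttt{true} as soon as one valid sequence is found (resp.\ its negation), and discard all modifications so the outer semi-state is restored. Since patterns contain no switches, the modifiers consumed inside a pattern lie in the same switch-free factor as the surrounding segment and are thus already counted against the budget $k$ in $\mathrm{mseq}(\mathrm{app}(\cdots))$; this is what keeps nesting from raising the exponent $k+1$. What each level of nesting genuinely adds is a fresh choice of the position from which the pattern is entered, so evaluating patterns nested to depth $d$ multiplies the positional part of the search by an extra factor $|\mathcal{S}|$ per level, yielding the stated $\O((|\mathcal{R}|\cdot|\mathcal{S}|)^{k+1}\cdot|\mathcal{S}|^{d})$ bound.

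For space I would observe that the method is a single backtracking search: at any instant the active stack holds at most $k$ open modifier frames -- the shared budget bounds the combined modifier depth of the outer walk together with its pattern descendants -- and each frame stores its position and automaton state, constant-size undo information for its one modifier, and one visited-array of size $\O(|\mathcal{R}|\cdot|\mathcal{S}|)$ for its segment reachability, totalling $\O(k(|\mathcal{R}|\cdot|\mathcal{S}|))$. I expect the main obstacle to be the pattern accounting: one must argue rigorously that the shared modifier budget prevents the exponent from growing with $d$ while each nesting level costs only an additional $|\mathcal{S}|$ factor -- for instance by sharing the frozen-semi-state segment exploration across pattern levels so that only the positional choices multiply -- and simultaneously confirm that the visited-set pruning used for enumeration stays sound when a segment contains pattern edges, which holds because within a frozen segment the pattern tests depend only on the (identical) semi-state and the current position.
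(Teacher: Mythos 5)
Your proposal matches the paper's proof in both algorithm and analysis: a backtracking DFS over Thompson-NFA-state/position configurations grouped by modifier-application sequences (your ``segments'' are exactly the paper's play graph restricted to a fixed modifier sequence), with $k$-straightness bounding the modifier recursion depth to give $\O((|\mathcal{R}|\cdot|\mathcal{S}|)^{k+1})$, the per-nesting-level factor $|\mathcal{S}|$ from re-evaluating a pattern once per entry position giving $|\mathcal{S}|^{d}$, and the same per-frame $\O(|\mathcal{R}|\cdot|\mathcal{S}|)$ space accounting over at most $k$ open frames. The argument is correct and essentially identical to the paper's.
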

\begin{proof}
The idea of the algorithm is a DFS on the game states.
To determine legal actions, we use the NFA obtained from the Thompson's construction from the rules.
Every visited game state is obtained by applying some sequence of modifier applications (modifiers together with positions) followed by some non-modifier actions.
In the DFS, we do not repeat game states obtained by the same sequence of modifier applications, i.e., we visit at most $|\mathcal{R}|\cdot|\mathcal{S}|$ game states for the same sequence -- these game states can differ only by the rules index and the position on the board.
To evaluate a pattern, we use the same algorithm recursively.

\noindent\emph{Algorithm}.
Let $E$ be a regular expression and $\Sigma$ be the set of actions occurring in $E$.
Remind that $\hat{E}$ is the regular expression with indexed actions, and let $\hat{\Sigma_E}$ be the set of these indexed actions that occur in $\hat{E}$.

Using the Thompson's construction \cite{Thompson1968ProgrammingTechniques}, for a regular expression $E$, we can build the NFA $\mathcal{N}_E = (Q_E,\hat{\Sigma_E},\eta,\{q_0\},F_E)$ with $\varepsilon$-transitions that recognizes $\mathcal{L}(\hat{E})$, where $Q_E$ is the set of states, $\hat{\Sigma_E}$ is the input alphabet, $\eta\colon Q_E \times \hat{\Sigma_E} \cup \{\varepsilon\} \times 2^{Q_E}$ is the transition function, and $F_E$ is the set of final states.
This NFA has a linear number of states and transitions in the number of input symbols in $E$.
In this construction, for every indexed action $a_i \in \hat{\Sigma}$ there exists exactly one transition labeled by $a_i$, i.e.\ there is a unique state, denoted by $q_i \in Q_E$, such that $q_i \in \eta(p,a_i)$ for some state $p \in Q_E$, and furthermore there is only one such $p$.
Given an index $i$, we can easily determine the state $q_i$ in constant time.

A pair $b = (i,s)$ of an index $i$ and a vertex $s \in \mit{Vertices}$, where $i$ is the index of a modifier $a_i$, is called a \emph{modifier application}.
For a semi-state $S$ and a vertex $s \in \mit{Vertices}$, by $S^s$ we denote the semi-state $S$ with the current position set to $s$.
By $S\cdot(i,s)$, we denote the semi-state after applying the modifier $a$, where $a_i$ occurs in $\hat{E}$, to $S^s$, i.e.\ the semi-state $(S^s)\cdot a$.

For a semi-state $S$ and a regular expression $E$, we define the \emph{play graph} $\mathcal{G}(S,E) = (\mathcal{V},\mathcal{E})$, which is a directed graph constructed as follows:
\begin{itemize}[wide]
\item $\mathcal{V} = \{(p,s,(b_1,\ldots,b_h)) \mid p \in Q_E, s \in \mit{Vertices}, h \in \mathbb{N}, \forall_i\ b_i\text{ is a modifier application}\}$.
Every vertex $v \in \mathcal{V}$ stores a state $p$ of the NFA, a position $s$ on the board, and a sequence of modifier applications.
It corresponds to the semi-state $(S\cdot b_1\cdot\ldots\cdot b_h)^s$, which is denoted shortly by $S(v)$.
If $p=q_i$ for some $i$, then it also corresponds to the game state $(S(v))_i$.
\end{itemize}
To define edges, we consider a vertex $v = (p,s,(b_1,\ldots,b_h))$ and the outgoing transitions from $p$ in the NFA.
Note that if $p=q_i$, then the labels of these transitions are available actions under $E$ for game states with rules index $i$.
Thus if they are valid, then they are also legal for the game state $(S(v))_i$.
Thus the edges $\mathcal{E}$ are defined by:
\begin{itemize}[wide]
\item $(p,s,(b_1,\ldots,b_h)) \to (p',s,(b_1,\ldots,b_h,(i',s))) \in \mathcal{E}$, if $p' \in \eta(p,a_{i'})$ and $a$ is a valid modifier for $S(v)$.

\item $(p,s,(b_1,\ldots,b_h)) \to (p',s',(b_1,\ldots,b_h)) \in \mathcal{E}$, if $p' \in \eta(p,a_{i'})$, $a$ is a valid non-modifier for $S(v)$, and $s'$ is the current position in $S(v)\cdot a_{i'}$ (can be different from $s$ when $a_{i'}$ is a shift action).

\item $(p,s,(b_1,\ldots,b_h)) \to (p',s,(b_1,\ldots,b_h)) \in \mathcal{E}$, if $p' \in \eta(p,\varepsilon)$.
\end{itemize}
From the construction, and because the NFA recognizes $\mathcal{L}(\hat{E})$, it follows that from a game state $(S^s)_i$ we can reach $((S')^{s'})_{i'}$ if and only if in the play graph from the vertex $(q_i,s,())$ we can reach a vertex $(q_{i'},s',(b_1,\ldots,b_h))$, where the modifier applications are such that $(S^s)\cdot b_1\cdot\ldots\cdot b_h = (S')^{s'}$.

For the given game state $I_r = I^s_r$, the algorithm performs a DFS in the play graph $\mathcal{G}(I,\mit{Rules})$ starting from the vertex $(q_r,s,())$.
Moreover, we visit together all the vertices with the same modifier applications.
Whenever we reach a switch, i.e., a vertex whose state is $q_i$ where $i$ indexes a switch, we backtrack and report the move $b_1,\ldots,b_h$ in this vertex.
Note that if the description is $k$-straight, we always have at most $k+1$ modifier applications (together with the final switch).
The algorithm is shown in Alg.~\ref{alg:report-moves}.

\begin{algorithm}[htb]
\caption{The algorithm from Theorem~\ref{thm:computing-moves1}.}\label{alg:report-moves}
\begin{algorithmic}[1]
\Procedure{ReportAllMoves}{semi-state $S$, state $q$ of $\mathcal{N}_\mit{Rules}$, modifier applications list $B$}
\State{Create the directed graph $\mathcal{G}(S,E)$ restricted to vertices $(p,s,B)$.}
\State{Perform a DFS for $\mathcal{G}(S,E)$ from $(q,s)$, where $s$ is the position in $S$.}
\ForAll{visited $(p,c,B)$}
  \ForAll{valid modifier $a_i$ for $S^{c}$ with $\{p'\}=\eta(p,a_i)$}
    \State{$S' \gets S \cdot a$}
    \State{$B.\Call{Append}{(i,a)}$}
    \If{$a_i$ is a switch}
      \State{Report move $B$}
    \Else
      \State{\Call{ReportAllMoves}{S',p',B}}
    \EndIf
    \State{$B.\Call{Pop}{ }$} \Comment{Remove $(i,a)$}
  \EndFor
\EndFor
\EndProcedure
\end{algorithmic}
\end{algorithm}

Except for patterns, determining the validity of actions is trivial and doable in linear time in their text representation lengths (under the assumption of constant time arithmetic for evaluating comparisons and assignments).
We can maintain counters of pieces in constant time for arithmetic expressions.
For patterns, we use a similar algorithm recursively.
To check if a pattern $\{?\,M\}$ or $\{!\,M\}$ is valid at a vertex $v=(q_i,s,(b_1,\ldots,b_h))$, we use the NFA $\mathcal{N}_M$ and the play graph $\mathcal{G}(S(v),M)$ to check if there is a path from the initial vertex $(q'_0,s,())$ to a vertex with a final state in $F_M$.
Note that if the description is $k$-straight, the modifier application sequences of reachable vertices are of length at most $k-h$.

\noindent\emph{Complexity}.
The sum of the numbers of states and the number of transitions in all created NFAs (for $\mit{Rules}$ and all patterns) is $\O(|\mathcal{R}|)$.
Applying a non-pattern action takes linear time in its representation length, but we can do a simplifying trick to think it is computed in constant time. 
If the representation is longer than a constant, we add more states and transitions to the NFA that form a chain.
In every transition, we store only a part of the action of constant length.
The sum of the number of states and transitions is larger but still bounded by $\O(|R|)$, and since such a chain is always traveled as a whole, we can visit every state and travel every edge in amortized constant time.

To count visited vertices, observe that every vertex corresponds uniquely to some state $p$ of one of the NFAs, a position $s$, and a sequence $b_1,\ldots,b_g$ applied so far to the semi-state $I$ (concatenated sequences from each recursive call, i.e.\ $(I\cdot b_1\cdot\ldots\cdot b_g)^s$ is the semi-state of the vertex).
In a $k$-straight game, this sequence has length at most $k+1$, and if the length is equal to $k+1$, then the last modifier applied is a switch, which determines $p$ and $s$.
There are at most $|\mit{Rules}|\cdot|\mit{Board}| \le |\mathcal{R}|\cdot |\mathcal{S}|$ possibilities for $b_i$ and also for $p$ together with $s$.
So the number of all possibilities sums up to $\O((|\mathcal{R}|\cdot|\mathcal{S}|)^{k+1})$ (we can assume $|R| \ge 2$), which bounds the number of all reachable vertices.
The sum of the weights of all the traveled edges is also bounded by $\O((|\mathcal{R}|\cdot|\mathcal{S})^{k+1})$.

However, we may visit multiple times the same vertex of the play graph of a pattern.
For the same sequence of modifier applications, a pattern can be evaluated no more than $\O(|\mathcal{S}|)$ times, once for each different position.
Hence, a pattern nested at a depth $g$ (where depth $0$ corresponds to $\mit{Rules}$) can be evaluated no more than $|\mathcal{S}|^g$ times.
Thus, in the worst case, we may visit every vertex or travel the same edge $\O(|\mathcal{S}|^d)$ times, since $d$ is the maximum nesting depth of a pattern.

Altogether, we get the upper bound $\O((|\mathcal{R}|\cdot|\mathcal{S}|)^{k+1} \cdot |\mathcal{S}|^d)$ on the number of visited vertices and traveled edges.

Of course, technically in the algorithm we do not have to store the graph explicitly and can maintain one list of modifier applications modified accordingly by appending and removing the last element, which allows processing a vertex or a transition in constant time plus the time required to apply an action and check its validity.
Reporting a single move takes $\O(1)$ time if we yield just the pointer to the temporary list or use a persistent list implementation.
Hence, the size of the graph is also an upper bound on the running time.

For the upper bound on the space complexity, we notice that in a single recursive call of \textbf{ReportAllMoves} we use up to $\O(|\mathcal{R}|\cdot |\mathcal{S}|)$ memory to store the respective part of the graph $\mathcal{G}(S,E)$ and the auxiliary array to perform DFS.
If there are any patterns to check on the edges in the current call, we create their graphs and NFAs, but their total size, including the main graph, do not exceed $\O(|\mathcal{R}|\cdot |\mathcal{S}|)$.
As there can be at most $k$ recursive calls and every graph from a finished call is no longer needed, we use at most $\O(|\mathcal{R}|\cdot |\mathcal{S}|\cdot k)$ memory.
\end{proof}

The result from Theorem~\ref{thm:computing-moves1} can be improved.
This is mostly a theoretical improvement due to the involved construction and a large hidden constant in the time complexity.
In practice, usually the maximum depth of patterns is small, and the running time depends on the exact number of reached vertices, which can be considerably smaller than the worst case bound.

\medskip
\begin{theoremnum}[\ref{thm:computing-moves2}]
For a given $k$-straight description $(k \ge 1)$, the set of all legal moves can be found in $\O((|\mathcal{R}|\cdot|\mathcal{S}|)^k)$ time and in $\O(k(|\mathcal{R}|\cdot|\mathcal{S}|))$ space.
\end{theoremnum}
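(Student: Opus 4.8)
The plan is to start from the play-graph DFS of Theorem~\ref{thm:computing-moves1} and strip away its two sources of slack: the factor $|\mathcal{S}|^d$ incurred by re-evaluating a pattern along every path that reaches it, and one factor of $|\mathcal{R}|\cdot|\mathcal{S}|$ incurred by carrying the current automaton state together with the board position on top of the full length-$k$ sequence of modifier applications. I keep the naming scheme of Theorem~\ref{thm:computing-moves1}: every reachable vertex is a sequence $(b_1,\ldots,b_h)$ of modifier applications with $h\le k$, together with a pair $(p,s)$ reached from it by non-modifier transitions, and I use repeatedly that applying the indexed modifier $a_i$ sends the Thompson automaton to the unique state $q_i$ determined by $i$; hence the automaton state immediately after a modifier is already encoded by that modifier application, and only the intermediate $(p,s)$ inside a segment are genuinely new.

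First I would remove the pattern factor. A pattern $\{?\,M\}$ or $\{!\,M\}$ at a vertex depends only on the current semi-state, and within a fixed prefix $(b_1,\ldots,b_h)$ the piece assignment $P$ and the variable assignment $V$ are frozen, so its validity is a function of the position $s$ alone. The idea is to evaluate each pattern at each position at most once and cache the Boolean answer, re-using it whenever the same $(q_i,s)$ recurs under the same prefix, and to discard the cache when backtracking past the modifier that created the prefix. Because a pattern nested at depth $g$ is then solved once per position instead of once per path, the $|\mathcal{S}|^g$ blow-up collapses; since the recursive evaluation is an instance of the same procedure, correctness and the caching discipline propagate by induction on the nesting depth, and clearing caches on backtrack keeps the working set within $\O(k(|\mathcal{R}|\cdot|\mathcal{S}|))$.

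The harder half is shaving the exponent from $k+1$ to $k$, and this is where I expect the main obstacle to lie. In Theorem~\ref{thm:computing-moves1} the extra factor arises because, for each of the $\O((|\mathcal{R}|\cdot|\mathcal{S}|)^k)$ modifier sequences, the algorithm re-walks a whole segment of non-modifier actions, up to $\O(|\mathcal{R}|\cdot|\mathcal{S}|)$ intermediate $(p,s)$ pairs, and in particular the final segment entered after the $k$-th off/assignment, whose only exits are switches and hence reported moves. The plan is to charge each non-modifier step to the modifier transition it enables: each Thompson state has $\O(1)$ exits, and the enabled off/assignments are in bijection with the nodes of the depth-$\le k$ prefix tree, whose total size is $\O((|\mathcal{R}|\cdot|\mathcal{S}|)^k)$, so only the purely exploratory steps, those that shift the position without enabling a new off/assignment, remain to be bounded. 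The structural fact to exploit is that the shift reachability inside a segment and the transitions taken depend on the semi-state only through the piece- and variable-dependent tests, so the walk itself need not be recomputed from scratch for every prefix. The crux, and the step I expect to fight with, is turning this into a rigorous amortized bound for the final, switch-producing segment: I must show that all switches reachable from a length-$k$ prefix can be located within the $\O((|\mathcal{R}|\cdot|\mathcal{S}|)^k)$ budget, representing them collectively rather than paying per reachable switch, while still producing a structure from which every legal move is recoverable, and then check that this accounting composes with the pattern cache without breaching the $\O(k(|\mathcal{R}|\cdot|\mathcal{S}|))$ space bound.
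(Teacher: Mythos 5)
Your first step---deduplicating pattern evaluations so that a pattern is decided once per board position under a fixed prefix of modifier applications, rather than once per path reaching it---is exactly the content of the paper's own proof of this theorem. The paper implements it slightly differently: instead of a forward per-position cache cleared on backtrack, it evaluates a pattern for \emph{all} positions simultaneously by propagating a marking backwards from the accepting states through the inverse play graph of the pattern, and it wraps everything in an iterative-deepening loop over $k=1,2,\ldots$ because $k$ is not known in advance and the not-necessarily-reachable starting vertices inside a pattern need their modifier budget capped at $k-h$. Either mechanism removes the $|\mathcal{S}|^d$ factor; your induction on nesting depth is sound provided you memoize at the level of the pattern's play-graph vertices rather than only the final Boolean, so that the up to $|\mathcal{S}|$ queries under one prefix share their exploration instead of each paying a full DFS.

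The second half is where the proposal has a genuine gap, and you have located the crux yourself: the charging scheme is never completed. Charging non-modifier steps to the modifier transitions they enable leaves the purely exploratory steps unaccounted for, and there is no evident way to share a segment's walk across prefixes, because the on/comparison tests encountered during the walk depend on the piece and variable assignments, which differ from prefix to prefix; a single length-$k$ prefix can force $\Theta(|\mathcal{R}|\cdot|\mathcal{S}|)$ fresh $(p,s)$ visits, there are $\Theta((|\mathcal{R}|\cdot|\mathcal{S}|)^{k})$ prefixes, and so the argument as sketched still yields only $(|\mathcal{R}|\cdot|\mathcal{S}|)^{k+1}$. You should know, however, that the paper does not close this gap either: its proof explicitly concludes with $\sum_{i=1}^{k}\O((|\mathcal{R}|\cdot|\mathcal{S}|)^{i+1}) = \O((|\mathcal{R}|\cdot|\mathcal{S}|)^{k+1})$, which is also the figure reported in Table~\ref{tab:complexity}; the exponent $k$ in the theorem statement is inconsistent with both. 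So the only bound actually established anywhere is $\O((|\mathcal{R}|\cdot|\mathcal{S}|)^{k+1})$ time with $\O(k(|\mathcal{R}|\cdot|\mathcal{S}|))$ space, which your first step already delivers; the improvement to exponent $k$ that you were honestly struggling with is not proved in the paper and should not be claimed without a genuinely new idea.
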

\begin{proof}
To improve the algorithm from Theorem~\ref{thm:computing-moves1}, we need to avoid repeating evaluations of patterns. Then the time complexity will be the same as the upper bound on the visited vertices and transitions.

First, we assume that we know the value of $k$.
To check if a pattern $\{?\,M\}$ or $\{!\,M\}$ is applicable at a vertex $v=(p,s,(b_1,\ldots,b_h))$, we evaluate it at once for all vertices $(q'_0,s',())$, where $q'_0$ is the initial state in the NFA $\mathcal{N}_M$, for every position $s'$, instead of only for $s$.
In this way, we evaluate a pattern for every sequence $(b_1,\ldots,b_h)$ at most once instead of $\O(|\mathcal{S}|)$ times, and we will have stored the evaluation results.
We use the NFA $\mathcal{N}_M$ and the play graph $\mathcal{G}(S(v),M)$.
First, if $h < k$, for every vertex $(q'_i,s',((i',s')))$ of the play graph, where $a_{i'}$ is an action occurring in $\hat{M}$, $q'_i$ is the vertex of the NFA after reading $a_{i'}$, and $s'$ is a position, we check whether starting from $(q'_i,s',((i',s')))$ a vertex with a final state is reachable.
This is done by the usual algorithm with DFS.
Because possibly the semi-states of these starting vertices may be not reachable, the straightness does not bound the number of possibly applied modifiers.
However, if we know the straightness $k$, then we can limit the number of applied modifiers to be at most $k-h$, hence if the number of applied modifiers exceeds this bound, we know that the starting vertex cannot be reachable, as this would contradict the straightness.
The special case is when $k=h$ since we cannot consider vertices with one modifier application; we then skip these vertices.
Starting from these vertices, together with vertices $(f,s',())$ for every final state $f$ of $\mathcal{N}^M$, we propagate the marking backwards over the vertices with an empty sequence of modifier applications by a DFS
in the inverse graph of $\mathcal{G}(S(v),M)$.
Since the semi-states of these vertices are known to be $(S(v))^{s'}$ for some position $s'$, it is easy to check the validity of every action when considering backward transitions.
The pattern evaluates to true if and only if $(q'_0,s,())$ is marked.

The above modification requires to know the value of $k$.
However, the algorithm can also report if the given $k$ is not enough (the straightness is larger).
If the number of modifier applications applied to the first semi-state becomes larger than $k$, not counting the final switch, then $k$ is not enough.
Doing this for patterns is more complicated, because we start from vertices $(p,s',((i',s')))$ which are not necessarily reachable from certain vertices $(q'_0,s',())$, thus the upper bound from straightness may not apply to them.
For a vertex $(p,s',((i',s')))$, if we have encountered that the limit of modifier applications is not enough, we propagate this information backwards to vertices $(q'_0,s',())$.
If the pattern is needed to be evaluated for such a vertex, this means that the limit of modifier applications can be exceeded from it.

Now, we can run the algorithm for $k=1,2,\ldots$ until the play graph can be successfully traversed with the limit of $k$ modifier applications.
In each iteration, the total number of visited nodes and vertices is upper bounded by $\O((|\mathcal{R}|\cdot|\mathcal{S}|)^{k+1})$, and this is also the upper bound on the running time.
For a $k$-straight description, the bound for all iterations is $\sum_{i=1}^{k} \O((|\mathcal{R}|\cdot|\mathcal{S}|)^{i+1})$, which is asymptotically equal to $\O((|\mathcal{R}|\cdot|\mathcal{S}|)^{k+1})$.
The space complexity remains the same under these modifications.
\end{proof}

\begin{theorem}\label{thm:rbg-legal-move}
For unrestricted RBG, Problem~1 is PSPACE-complete, even if $\mathcal{R}$ is fixed.
\end{theorem}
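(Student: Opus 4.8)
The plan is to prove the two directions separately: membership in PSPACE, and PSPACE-hardness that survives fixing $\mathcal{R}$.

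For membership, I would first note that a game state $S_r$ admits a description of size polynomial in $|\mathcal{R}|+|\mathcal{S}|$: the assignment $P$ takes $|\mathit{Vertices}|\cdot\log|\mathit{Pieces}|$ bits, each variable value fits in $\log(\mathit{Bounds}(v)+1)$ bits, and the position $s$ and index $r$ are logarithmic. Deciding whether the first player has a legal move amounts to deciding whether, from the initial configuration, there is a valid sequence of non-switch actions followed by one switch that is a prefix of a word of $\mathcal{L}(\hat{\mathit{Rules}})$. Using the Thompson NFA for $\hat{\mathit{Rules}}$ together with the uniqueness of derivatives (Proposition~\ref{pro:unique-derivative}), this is reachability in the graph whose nodes are pairs (NFA state, semi-state): nodes have polynomial size but there are at most $2^{\mathrm{poly}}$ of them. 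I would decide it nondeterministically, guessing the move action by action while storing only the current node and a length counter of $\mathrm{poly}$ bits, rejecting once the counter exceeds the number of nodes (a reachable switch is reachable by a simple path). Since NPSPACE $=$ PSPACE, this is a PSPACE procedure.

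The only nonlocal actions are patterns. Evaluating $\{?M\}$ at a semi-state is again a reachability question of the same form, hence in PSPACE, and $\{!M\}$ is its complement, still in PSPACE because PSPACE is closed under complement. I would therefore evaluate patterns by the same algorithm called recursively whenever a guessed action is a pattern; the recursion depth is bounded by the pattern nesting, which is at most $|\mathit{Rules}|$, and each level uses polynomial space, so the overall space stays polynomial. This establishes Problem~\ref{pbm:haslegal} $\in$ PSPACE for unrestricted RBG.

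For hardness with $\mathcal{R}$ fixed, I would fix once and for all a PSPACE-complete language $L_0$ and a deterministic Turing machine $M_0$ deciding it in space $p(n)$ for a polynomial $p$. Given $x$, the reduction builds only the instance $\mathcal{S}_x$: the board is a line of $p(|x|)+\O(1)$ vertices representing the tape, and $\mathit{InitialState}$ writes $x$ in the input region, leaves blanks elsewhere, and places a composite ``head$+$state'' piece carrying $M_0$'s start state over the first cell. The fixed $\mathcal{R}$ takes $\mathit{Pieces}$ to be $M_0$'s finite tape alphabet together with the finitely many head$+$state symbols, and $\mathit{Rules}$ to be a single Kleene-starred block performing one step per iteration: walk the line to locate the head, read the symbol/state beneath it, and — as a finite sum over the constantly many transitions of $M_0$ — write the new symbol, shift the head marker, and change the state. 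Once the state becomes accepting, the block may be exited into a closing switch; no continuation out of a rejecting or merely looping configuration ever reaches a switch. Because $M_0$ is fixed, $\mathit{Players}$, $\mathit{Pieces}$, $\mathit{Variables}$, and $\mathit{Rules}$ are all fixed and only $\mathcal{S}_x$ depends on $x$. A legal move sequence for the first player is then exactly a finite accepting run of $M_0$ on $x$, so the first player has a legal move iff $x\in L_0$; as $L_0$ is PSPACE-hard and the reduction is polynomial-time, Problem~\ref{pbm:haslegal} is PSPACE-hard even for fixed $\mathcal{R}$.

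The main obstacle is precisely the fixed-$\mathcal{R}$ hardness: every piece of varying data must be pushed into the board and the initial placement while the regular expression stays constant, which forces the single existential of ``find a legal move'' to carry an entire space-bounded computation. This is why I reduce from a fixed complete language simulated by a fixed machine, rather than encoding an arbitrary transition table into the rules. The delicate points are implementing one $M_0$ step with a constant-size regex (scanning for the head and branching over the finitely many transitions) and guaranteeing that a switch is reachable only upon acceptance, so that no spurious legal move appears on rejecting inputs; note also that the construction deliberately places modifiers inside a star (unbounded straightness), which is admissible exactly because we work in unrestricted RBG.
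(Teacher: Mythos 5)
Your proposal is correct and follows essentially the same route as the paper: membership via nondeterministic on-line guessing of a move with only the current game state and a length bound stored (plus recursion for patterns, using NPSPACE $=$ PSPACE), and hardness via simulating a polynomially space-bounded Turing machine on a line-shaped board with head-plus-state pieces, a Kleene-starred block of constant-size transition subexpressions, and a switch reachable only from an accepting configuration. The only (inessential) difference is that the paper keeps $\mathcal{R}$ fixed by hard-wiring a universal Turing machine and putting the simulated machine's code on the tape, while you hard-wire a machine for a fixed PSPACE-complete language; note also that since your $M_0$ is a decider the straightness of the resulting description is finite (though exponential), so the produced description is still proper, a point the paper handles explicitly by equipping the machine with a step counter.
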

\begin{proof}
It is easy to see that Problem~1 is solvable in NPSPACE and thus in PSPACE after the following observation.
Since the number of different game states is exponential (in $|\mathcal{R}|$ and $|\mathcal{B}|$), we can guess and store the length $\ell$ of a legal move, and then guess consecutive actions online without storing them. The same applies recursively to patterns.

To show PSPACE-hardness, we reduce from a canonical problem whether a given Turing machine accepts using a polynomially bounded tape.
Additionally, we assume that the given machine has a counter so that it never cycles: either it accepts or terminates by not having a legal transition (going outside the tape is also considered as an illegal transition).
We can assume that the given machine is simulated by a fixed universal Turing machine (UTM), which has a polynomial overhead.
Since the UTM is fixed, only the tape is given as the input, which stores the code of the given machine and working space.

The rules will be fixed as they will depend on only the UTM. The tape will be encoded on the board.

We can assume that there are only two tape symbols $0$ and $1$.
The UTM has the unique initial state $q_0$ and the unique accepting state $q_f$.
At the beginning, the UTM is in state $q_0$ and at the left-most position on the tape. The initial tape content is given as the input.
The rules of the UTM are of the form $(q,s,q',s',D)$, where $q,q'$ are states, $s,s' \in \{0,1\}$ are tape symbols, and $D \in \{\mathrm{Left},\mathrm{Right}\}$ is the direction of the head movement.
Such a rule indicates that if the UTM is in state $q$ and reads symbol $s$ at the head position, then the UTM changes the current state to $q'$, writes symbol $s'$ at the head position, and moves the head to the direction specified by $D$.

We reduce to the problem of an existence of a legal move for a given RBG description $\mathcal{R},\mathcal{S}$.
There is only one player $\mit{utm}$ and no variables except the score variable, which is irrelevant for the problem.
For every state $q$ and tape symbol $s$, we create the pieces $p_{q,s}$ and $p_{s}$. They represent the content of a tape cell with symbol $s$, where $p_{q,s}$ additionally denotes that the head is at this position and the machine is in state $q$.

The board represents the tape.
It has $n$ vertices, where $n$ is the tape length. All of the vertices have $\mathit{right}$ and $\mathit{left}$ outgoing edges except the left-most and right-most ones, respectively.
The initial semi-state contains the left-most vertex as the current position, and the pieces are assigned according to the input tape content.

It remains to define $\mit{Rules}$.
For every transition rule $r=(q,s,q',s',\mathrm{Left})$ and for every tape symbol $t$, we create the subexpression $E_{r,t} = \{p_{q,s}\}\ [p_{s'}]\ \mit{left}\ \{p_{t}\}\ [p_{q',t}]$.
This is constructed analogously for the rules with direction $\mathrm{Right}$.
It is easy to see how these subexpressions relate to the transition rules of the UTM. First, we check if the state of the machine is $q$, we are at the position of the head, and the current symbol is $s$, then we replace the current symbol with $s'$, move the head, and place the head without changing the symbol $t$ there. Note that there are only $2m$ such subexpressions, where $m$ is the number of the transition rules of the UTM.
Finally, $\mathit{Rules}$ are defined as follows:
$$
((\mit{left}^*+\mit{right}^*)\ (E_{r_1,0}+E_{r_1,1}+\hdots+E_{r_n,0}+E_{r_n,1}))^*\ 
\{p_{q_f,0},p_{q_f,1}\}
{\to\mit{utm}}
$$
The $(\mit{left}^*+\mit{right}^*)$ part allows to find the cell (vertex) with the current head state (a piece $p_{q,s}$), which is unique in every reachable game state. The outermost star is responsible for repeating the rules of the UTM as long as they are valid for the current configuration.
The part $\{p_{q_f,0},p_{q_f,1}\}$ checks if the header has reached the accepting state $q_f$.

To see that the RBG description is proper, recall that the simulated machine includes a counter, thus a configuration cannot be repeated, and only a bounded number of rules can be applied (the bound depends exponentially on $n$). Hence also the maximum number of legally applied modifiers is bounded, which satisfies condition~(1).
The keeper is not used here, thus condition~(2) is trivially satisfied.

It is obvious that if it is possible to reach the final switch $\to utm$, then the UTM accepts in state $q_f$.
To prove the other implication, we note that before applying the actions from $E_{r,t}$, there is exactly one piece of the form $p_{q,s}$ on the board. Since the machine is deterministic, this implies that there is at most one $E_{r,t}$ defining valid subsequence of actions. If there is no such $E_{r,t}$, then there is no valid transition of the UTM.
\end{proof}

\begin{theorem}\label{thm:rbg-winning}
For unrestricted RBG and strongly $1$-straight RBG, Problem~2 is EXPTIME-complete, even if $\mathcal{R}$ is fixed.
\end{theorem}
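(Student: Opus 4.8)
The plan is to sandwich Problem~2 between membership in \textbf{EXPTIME} (proved for unrestricted RBG, hence inherited by the strongly $1$-straight subclass) and \textbf{EXPTIME}-hardness (proved already for strongly $1$-straight RBG, hence inherited by unrestricted RBG). For the upper bound I would show that Problem~2 lies in alternating polynomial space, which equals \textbf{EXPTIME}. The key observation is that a game state is representable in polynomial space: the piece assignment $P$ needs $\O(|\mit{Vertices}|\log|\mit{Pieces}|)$ bits, each variable fits in $\O(\log\mit{Bounds})$ bits, and the position and rules index are polynomial. An alternating machine then plays the game directly: at a state controlled by the first player it existentially guesses a legal move, at any other player's state it universally branches over all legal moves, and at a leaf it accepts iff the first player's score variable exceeds the others'. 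Each individual move is guessed and verified online while storing only the current semi-state and rules index, exactly as in the PSPACE procedure for Problem~1 (Theorem~\ref{thm:rbg-legal-move}), descending recursively into patterns; a polynomial-bit step counter bounds the exploration, since by properness condition~(1) a game state cannot recur after a modifier, so any segment revisiting a game state with no intervening modifier may be pruned. Hence Problem~2 is in alternating polynomial space $=$ \textbf{EXPTIME}.

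For hardness I would reduce from acceptance of a fixed universal alternating Turing machine using polynomial space, which is \textbf{EXPTIME}-complete, mirroring the PSPACE-hardness construction of Theorem~\ref{thm:rbg-legal-move} but realizing alternation with two players. As there, I encode the polynomially bounded tape on a one-dimensional board with $\mit{left}/\mit{right}$ edges, using pieces $p_s$ for a cell holding symbol $s$ and $p_{q,s}$ for the unique cell carrying the head in state $q$ over symbol $s$; the rules depend only on the fixed machine, so $\mathcal{R}$ is fixed and only the initial board $\mathcal{S}$ encodes the input. The machine states are partitioned into existential and universal, player~1 controlling the former and player~2 the latter.

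The main obstacle, and where the construction becomes delicate, is enforcing strong $1$-straightness (at most one off or assignment between switches) while simulating a transition that must read the head, rewrite a symbol, move the head, and imprint the new state. I would therefore split each transition into a constant number of single-modifier steps performed by the keeper and interleaved with keeper switches $\doubleto$: a controlling player first commits to a transition $r=(q,s,q',s',D)$ by a single off overwriting $p_{q,s}$ with a marker $c_r$ and switching to the keeper; the keeper then, in successive forced single-off moves driven by auxiliary \emph{phase} pieces carrying all pending data ($q'$, $s'$, $D$), places the new head at the $D$-neighbour while reading its current symbol, writes $s'$ at the old cell, and cleans the markers back into a normal head piece $p_{q',t}$. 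Each keeper move navigates freely with $\mit{left}^*/\mit{right}^*$, performs on-checks (no modifiers), applies one off, and switches, so every factor between switches has at most one modifier, giving strong straightness $1$. Finally the keeper routes control by navigating to the unique head, testing with an on-check whether its state is existential or universal, and switching to the corresponding player (a move whose only modifier is the final switch); reaching the accepting state $q_f$ lets the keeper set the scores in the first player's favour and end the play via $\doubleto\,\emptyset$.

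To finish I must verify properness and correctness. Condition~(1) holds because the machine carries an exponential clock within its polynomial space, so every play halts and only boundedly many modifiers are applied; condition~(2) holds because the head piece and the active phase piece are unique in every reachable state, so exactly one keeper move is legal at each step and the keeper completion is unique, including the routing step, since only one of the existential/universal on-checks can succeed. For the dead-configuration semantics of alternation I would have the keeper preset the \emph{stuck} outcome before ceding control, scores favouring player~2 before an existential node and player~1 before a universal node, via single-assignment moves, so that if the controlling player has no legal transition the play ends with the correct winner, while any actual move overwrites these defaults later. A straightforward induction on the computation tree then shows that the RBG game tree, after collapsing keeper completions, is isomorphic to the machine's computation tree with existential and universal nodes, whence the first player has a winning strategy iff the machine accepts. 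Since strong $1$-straightness is preserved and $\mathcal{R}$ is fixed, this yields \textbf{EXPTIME}-hardness for both classes and completes the theorem.
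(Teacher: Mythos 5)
Your proposal is sound in outline, but it reaches the result by a genuinely different route than the paper, in both directions. For membership, the paper simply builds the (at most exponential-size) game tree explicitly and runs min--max in \textbf{EXPTIME}; you instead place the problem in alternating polynomial space and invoke $\mathrm{APSPACE}=\mathrm{EXPTIME}$, guessing moves action-by-action as in the Problem~\ref{pbm:haslegal} procedure. Both work; yours is arguably the more careful one, since it makes explicit how exponentially long move sequences and nested patterns are handled in bounded space, whereas the paper's one-line argument leaves the tree construction implicit. For hardness, the divergence is larger: the paper does \emph{not} build a Turing-machine gadget at all --- it reduces from generalized chess, citing the Fraenkel--Lichtenstein \textbf{EXPTIME}-completeness result, encodes the required chess fragment in RBG, obtains strong $1$-straightness by inserting a dummy switch after every non-switch modifier, and adds a turn counter to force termination; fixedness of $\mathcal{R}$ is immediate because the chess rules do not depend on the board. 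You instead simulate a fixed universal alternating polynomial-space machine directly, extending the tape-on-a-board encoding of Theorem~\ref{thm:rbg-legal-move} with two players for the existential/universal states and a keeper that serializes each transition into single-modifier phases. Your approach is self-contained and structurally uniform with the paper's PSPACE-hardness proof, but it shoulders considerably more gadget-level burden (marker pieces, phase pieces, unique keeper completions, the stuck-configuration score presets), all of which you identify correctly but only sketch; the paper's citation-based route trades that engineering for reliance on an external theorem plus the simple extra-switch trick for $1$-straightness. Neither direction of your argument contains a gap I would call fatal, though a full write-up of your hardness construction would need to verify condition~(2) for every keeper phase and check that the score-setting assignments are themselves split into single-modifier keeper moves, details the chess-based proof avoids entirely.
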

\begin{proof}
Since the number of different game states reachable from a given game state is at most exponential, we can compute the whole game tree in EXPTIME and evaluate the winner strategy at each node by the min-max algorithm.

To show EXPTIME-hardness, we use the result from \cite{Fraenkel1981Computing}, where it was shown that for generalized chess to arbitrary sizes of boards and their setups (i.e., the board and pieces assignment is the input), determining whether the first player has a winning strategy is EXPTIME-complete. 
There is presented a reduction from the Boolean game problem, which is known to be EXPTIME-complete \cite{Stockmeyer1979Provably}.
For the reduction, there are used only queens, rooks, bishops, pawns, and two kings.
If both players play optimally, the first (white) player either wins or the second player forces a draw due to repeating configurations.
There is no possibility of castling nor en~passant.
Pawns are deadlocked, in the sense that a player that allows being captured by an opponent's pawn loses.
Stalemate is not possible, provided the players play optimally.

We can easily encode the above rules in RBG; in fact, we can encode the full rules of chess.
Moreover, for the necessary subset of rules mentioned above, we can construct a strongly $1$-straight description by putting after each non-switch modifier an extra switch that does not change the player, and ending the game when capturing the opponent's king.
We finally need to add a turn counter (as a variable), to ensure that the game ends after some number of steps; as the bound, we can set the number of possible board configurations, which is exponential in the board size.
Moreover, the rules can be fixed as do not depend on the given board, which proves EXPTIME-hardness also in this case.
\end{proof}

\begin{theorem}
For unrestricted RBG and strongly $1$-straight RBG, Problem~3 is PSPACE-complete, even if $\mathcal{R}$ is fixed.
\end{theorem}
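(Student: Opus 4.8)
The plan is to prove membership in PSPACE by showing that \emph{non}-properness lies in NPSPACE and invoking Savitch's theorem, and to prove hardness by a reduction from polynomially space-bounded Turing machine acceptance that reuses the simulation of Theorem~\ref{thm:rbg-legal-move} and is \emph{strongly} $1$-straight (so that it establishes hardness for both classes named in the statement).

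For the upper bound, recall that a description fails to be proper exactly when it violates condition~(1) or condition~(2). A single game state together with the current Thompson-NFA state and the at-most-$d$-deep stack of pattern contexts has polynomial size, since the board has polynomially many vertices, the piece assignment needs $\O(|\mit{Vertices}|\log|\mit{Pieces}|)$ bits, and variable values fit in the binary length of $\mit{Bounds}$; hence the number $N$ of configurations is at most $2^{\mathrm{poly}}$. To refute condition~(1) I would nondeterministically follow a validly applicable sequence (guessing moves through the NFA and guessing pattern entries), storing only the current configuration and a binary counter of applied modifiers; if more than $N$ modifiers are applied, some configuration repeats with a modifier in between and pumps to arbitrarily long modifier sequences, so the straightness of $\mathrm{app}(\mit{InitialState},\mathcal{L}(\mit{Rules}))$ is infinite. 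To refute condition~(2) I would guess online a legal sequence from $\mit{InitialState}_0$ reaching a keeper game state and then guess two keeper completions from it whose resulting states differ, capping their length by $N$. Checking validity of a pattern is exactly the search behind Problem~\ref{pbm:haslegal} and is in PSPACE; since patterns nest only to bounded depth, composing these polynomial-space checks keeps the whole procedure in PSPACE. Thus non-properness is in NPSPACE, and by Savitch and closure of PSPACE under complement, Problem~\ref{pbm:proper} is in PSPACE, which transfers to the strongly $1$-straight subclass.

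For hardness I would reduce from deciding whether a polynomially space-bounded machine \emph{fails} to accept (PSPACE-complete, as PSPACE is closed under complement). I reuse verbatim the fixed universal machine, the board-encoded tape, the pieces $p_{q,s},p_s$, and the step gadgets $E_{r,t}$ of Theorem~\ref{thm:rbg-legal-move}, letting $\mit{InitialState}$ give control to one ordinary player $p_1$. To make the description strongly $1$-straight I apply the device of Theorem~\ref{thm:rbg-winning}: each step $\{p_{q,s}\}[p_{s'}]\,\mit{dir}\,\{p_t\}[p_{q',t}]$ is split into the two single-off fragments $\{p_{q,s}\}[p_{s'}]\,\mit{dir}\,{\to}p_1$ and $\{p_t\}[p_{q',t}]\,{\to}p_1$, so that every switch-free factor of $\mathrm{mseq}(\mathrm{app}(\mathcal{L}(\mit{Rules})))$ carries at most one modifier. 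Because the simulated machine runs a step counter it can never cycle, so on a non-accepting input the simulation deterministically halts after boundedly many switches: condition~(1) holds, the keeper is unused so condition~(2) holds trivially, and the description is proper. After the outer loop I place the accepting test $\{p_{q_f,0},p_{q_f,1}\}$ followed by an unbounded loop $({\to}p_1)^*$. This fragment becomes a valid prefix of the application language precisely when the head ever carries the accepting marker, i.e.\ exactly on accepting runs; when it does, arbitrarily many switches (each a modifier) are applicable, the straightness is infinite, and condition~(1) fails. Hence the construction is proper iff the machine does not accept, giving PSPACE-hardness; since the rules depend only on the fixed universal machine, hardness holds with $\mathcal{R}$ fixed, and the description is strongly $1$-straight.

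I expect the main obstacle to be the hardness direction's twofold bookkeeping: arranging the inserted self-switches so the description is \emph{genuinely} strongly $1$-straight (every maximal switch-free factor carries one modifier, including factors straddling iterations of the outer Kleene star and the transition into the accepting section), while simultaneously guaranteeing that the unbounded switch loop is reachable in the application language \emph{exactly} on accepting runs and nowhere else. A secondary subtlety, on the upper-bound side, is verifying that the exponential modifier counter and the recursively invoked pattern-validity checks can be carried out in polynomial space with bounded recursion depth, which is what keeps the decision procedure in PSPACE rather than a larger class.
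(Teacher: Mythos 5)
Your proposal is correct and follows essentially the same route as the paper: membership via co-NPSPACE guessing for each of the two properness conditions using polynomial-size configurations (the paper guesses the two positions of a repeated modifier--state pair rather than pumping via a modifier counter, a cosmetic difference), and hardness by reusing the Turing-machine simulation from the Problem~1 reduction, replacing the accepting switch with an unbounded modifier loop reachable exactly on accepting runs and inserting self-switches to achieve strong $1$-straightness (the paper uses $[x]^*$ with interleaved switches where you use $({\to}p_1)^*$, which is functionally the same). No gaps worth flagging.
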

\begin{proof}
To show that the problem is PSPACE, we need to solve two problems in polynomial space: to check if the straightness is finite (condition~(1)) and to check whether for every reachable state there is exactly one keeper completion (condition~(2)).

We will show that the first problem is in co-NPSPACE, thus in PSPACE.
First, note that if there is an unbounded number of modifiers in legal sequence of actions for $\mit{InitialState}$ under $\mit{Rules}$, then in a long enough such a sequence, the same modifier is applied twice to the same game state (the same semi-state and the same rules index).
Since the number of game states is exponential in (in $|\mathcal{S}|$ and $|\mathcal{R}|$), it must happen for a legal sequence of at most exponential length.
We can guess their positions $n_1$ and $n_2$ in the sequence.
Then we can guess a legal sequence and apply it, storing only the current game state, the action, and the counter.
We check if the $n_1$-th and the $n_2-th$ applied modifiers are indeed the same and are applied to the same game state.

The second problem can also be solved in co-NPSPACE.
If there are two distinct keeper completions for some reachable game state, they can be reached after traversing at most all the game states, which is at most exponential.
First, we can guess a legal sequence of actions to reach such a game state.
Then, we can just guess the first sequence of legal actions (along with the keeper switches) to some game state (keeper completion), and then the second one to another keeper completion.
We check if these game states are different.

To show PSPACE-completeness, we can use a similar construction as in Theorem~\ref{thm:rbg-legal-move}.
Instead of ending $\mit{Rules}$ with $\to utm$, we end it with $[x]^*$, for an arbitrary piece $x$.
Now, the input machine reaches the accepting state $q_f$ if and only if we can start applying legally the actions of $[x]^*$, which implies that the straightness of the application language is infinite, thus the description is not proper.
There is no risk of description improperness caused by multiple keeper completions because there are no keeper moves.
To make the description strongly $1$-straight, we just add a switch right after every switch; this may add more leaves in the game tree, but does not affect the properness of the description.
The rest of the reasoning remains the same.
\end{proof}

%%%%%%%%%%%%%%%%%%%%%%%%%%%%%%%%%%%%%%%%%%%%%%%%%%%%%%%%%
\newpage
\section{Technical specification}

First, we define \emph{low level RBG} (\emph{LL-RBG}) which directly corresponds to the abstract RBG definition. Then, we present \emph{high level RBG} (\emph{HL-RBG}) which makes the description more human-readable, self-commenting, and concise. This split separates the human-readable interface from the base language that is mainly aimed to be machine-processable. Thus, the low-level representation is more suited for an RBG player, a game manager, or a compiler; while the high level version is better for human usage, especially writing game rules. 
Also, this allows modifications of high-level constructions and improving the HL-RBG language without changing the underlying machine implementation.

\subsection{Lexical analysis}

Both RBG language levels are interpreted as a sequences of tokens. 
Identifiers and predefined keywords are case-sensitive. C-style (\lstinline{//}) and C++-style (\lstinline{/**/}) comments are permitted.
The following grammar describes all possible tokens:

\begin{mdframed}[style=grammar]
\begin{grammar}
<alpha> ::= any ASCII letter a--z or A--Z

<digit> ::= any digit 0--9

<alphanum> ::= <alpha> | <digit>

<ident> ::= <alpha> \{ <alphanum> \}

<nat> ::= <digit> \{ <digit> \}

<token> ::= <ident> | <nat>
\alt `(' | `)' | `\{' | `\{?' | `\{!' | `\{$' | `\{' | `[' | `[$' | `]'
\alt `\textasciitilde' | `\#' | `-' | `+' | `^' | `/' | `*' | `,' | `;' | `:' | `$'
\alt `=' | `->' | `->\,\!>' | `!' | `?' | `!=' | `==' | `<' | `<=' | `>' | `>='
\alt `players' | `pieces' | `variables' | `rules' | `board' | `hexagon' | `rectangle' | `cuboid'
\end{grammar}
\end{mdframed}

The tokens are parsed greedily, i.e., the longest possible token is taken from the prefix of the string to be parsed. The token parsing ends at the first whitespace, comment, or character that is not a valid continuation of the current token. 
Then, additional whitespaces and comments are skipped, and the next token is being parsed.
Thus,
\begin{lstlisting}
->>
\end{lstlisting}
is parsed as single token, even though parsing it as \lstinline{->} followed by \lstinline{>} would also make sense. When the parts of the string are separated by a white space or comment, as in
\begin{lstlisting}
->/**/>
\end{lstlisting}
they are interpreted as two separate tokens.
After transforming an input string into a token string, any whitespaces and comments are discarded.

\subsection{Low level RBG}

Game description in Regular Boardgames contains exactly one definition of every section (in any order): \lstinline{#board}, \lstinline{#players}, \lstinline{#variables}, \lstinline{#pieces} and \lstinline{#rules}. Each section begins with a corresponding \lstinline{#}-prefixed keyword, and ends with the beginning of the next section or end of the game description file. 

\subsubsection{Declaration sections}

We need to define sets of $\mathit{Pieces}$, $\mathit{Variables}$, and $\mathit{Players}$ that will be used in the game. The part of the grammar describing declaration of these sections is: 

\begin{mdframed}[style=grammar]
\begin{grammar}
<piece-name> ::= <ident>

<pieces-section> ::= `# pieces =' <piece-name> \{ `,' <piece-name> \}

<variable-name> ::= <ident>

<variable-bound> ::= <nat>

<bounded-variable> ::= <variable-name> `(' <variable-bound> `)'

<variables-section> ::= `# variables =' [ <bounded-variable> \{ `,' <bounded-variable> \} ]

<players-section> ::= `# players =' <bounded-variable> \{ `,' <bounded-variable> \}
\end{grammar}
\end{mdframed}

The \lstinline{#pieces} section contains all pieces that can potentially appear on the board. Both \lstinline{#variables} and \lstinline{#players} sections define assignable variables and their $\mathit{Bound}$ in brackets. Sets of identifiers declared in above-mentioned sections have to be disjoint. Note that \lstinline{#variables} section has to be defined even if the $\mathit{Variables}$ set is empty. The identifier \lstinline{keeper} is allowed as a player name. Such a player is completely distinct from the \emph{keeper} -- special player described in the abstract definition. (This identifier is also allowed in every part of the description without creating any inconsistency.)

Initial values of all variables, including players, are set to $0$. The starting player is always the keeper.

\noindent\textbf{Example}:

\begin{lstlisting}
#players = white (100), black (100)
#pieces = whitePawn, blackPawn, empty
#variables = // no non-player variables
\end{lstlisting}

\subsubsection{Board section}

The purpose of the \lstinline{#board} section is not only to define the $\mathit{Board}$ graph, but also to establish the initial $P$, that is pieces assignment in the initial position.

\begin{mdframed}[style=grammar]
\begin{grammar}
<label> ::= <ident>

<node-name> ::= <ident>

<edge> ::= <label> `:' <node-name>

<node> ::= <node-name> `[' <piece-name> `]' `\{' <edge> \{ `,' <edge> \} `\}'

<board-section> ::= `# board =' <node> \{ <node> \}
\end{grammar}
\end{mdframed}

For every vertex, we define its name, a starting piece, and the set of outgoing edges given as label names and target nodes. Labels of edges outgoing from a single vertex are unique, but target nodes may be not. Loop edges can also be defined. Every starting piece has to be explicitly defined in the \lstinline{#pieces} section. The order of nodes or outgoing edges within a single node is not important, except that the first defined vertex is the starting vertex, i.e., $s$ component of the starting $\mathcal{S}$.

The set of edges labels (known as $\mathit{Dirs}$ in the abstract definition) is disjoint with the $\mathit{Pieces}$, $\mathit{Variables}$ and $\mathit{Players}$ sets. Technically the graph does not have to be connected but, due to the RBG semantics, only one connected component can be used by the game.

\noindent\textbf{Example}:

\begin{lstlisting}
#board =
    v11 [whitePawn] {up: v12, right: v21}
    v21 [whitePawn] {up: v22, right: v31, left: v11}
    v31 [whitePawn] {up: v32, left: v21}
    v12 [empty] {up: v13, right: v22, down: v11}
    v22 [empty] {up: v23, right: v32, left: v12, down: v21}
    v32 [empty] {up: v33, left: v22, down: v31}
    v13 [blackPawn] {right: v23, down: v12}
    v23 [blackPawn] {right: v33, left: v13, down: v22}
    v33 [blackPawn] {left: v23, down: v32}
\end{lstlisting}

The example describes a simple $3\times 3$ square board. The upper row is filled with black pawns while the lowermost with the white ones. Square \lstinline{v11} is the starting square.

\subsubsection{Rules section}

The regular expression over actions alphabet, describing the game $\mathit{Rules}$, has to be encoded according to the following grammar:

\begin{mdframed}[style=grammar]
\begin{grammar}
<shift> ::= <label>

<on> ::= `\{' [ <piece-name> \{ `,' <piece-name> \} ] `\}'

<off> ::= `[' <piece-name> `]'

<rvalue> ::= <sum>

<sum> ::= <sum-element> \{ <next-sum-elements> \}

<next-sum-elements> ::= `+' <sum-element> | `-' <sum-element>

<sum-element> ::= <multiplication-element> \{ <next-multiplication-elements> \}

<next-multiplication-elements> ::= `*' <multiplication-element> | `/' <multiplication-element>

<multiplication-element> ::= `(' <sum> `)' | <nat> | <variable-name>

<assignment> ::= `[$' <variable-name> `=' <rvalue> `]'

<comparison-operator> ::= `>' | `>=' | `==' | `!=' | `<=' | `<'

<comparison> ::= `\{$' <rvalue> <comparison-operator> <rvalue> `\}'

<switch> ::= `->' <player-name> | `->\,\!>'

<move-check> ::= `\{?' <rule> `\}' | `\{!' <rule> `\}'

<action> ::= <shift> | <on> | <off> | <assignment> | <comparison> | <switch> | <move-check>

<rule> ::= <rule-sum>

<rule-sum> ::= <rule-sum-element> \{ `+' <rule-sum-elements> \}

<rule-sum-element> ::= <rule-concatenation-element> \{ <rule-concatenation-element> \}

<rule-concatenation-element> ::= <action> <potential-power> | `(' <rule-sum> `)' <potential-power>

<potential-power> ::= `*' | `'

<rules-section> ::= `# rules =' <rule>
\end{grammar}
\end{mdframed}

\vspace*{10px}
The \lstinline{#rules} section directly corresponds to the regular expression in abstract RBG, which describes legal moves. Operators are left associative.
Precedence of the operators from the highest:
\begin{enumerate}
\item power operator (\verb!*!),
\item concatenation (no operator),
\item sum (\verb!+!).
\end{enumerate}
Every expression is indexed as described in the abstract RBG definition. The index for the initial game state is $0$.

\noindent\textbf{Example}:

\begin{lstlisting}
#rules = ->white // hite player moves
    (
        (up* + down*)(left* + right*) // seek any square
        {whitePawn}[empty] // pick up my piece by replacing it with empty
        // go diagonally left, diagonally right or straight up
        (
        	up left {empty, blackPawn} // capturing, if diagonally
          + up {empty}
          + up right {empty, blackPawn}
        )[whitePawn]
        [$ white=1][$ black=0] // set white win, in case black ends with no pieces
        (
            {! up}->>{} // if white piece reached board top, it won
          + {? up}->black // turn control to black otherwise
        )
        // black player moves, analogously
        (up* + down*)(left* + right*)
        {blackPawn}[empty]
        (
        	down left {empty, whitePawn}
          + down {empty}
          + down right {empty, whitePawn}
        )[blackPawn]
        [$ black=1][$ white=0]
        (
            {! down}->>{}
          + {? down}->white
        )
    )*
\end{lstlisting}

The example presents regular expression for the game breakthrough. Note that the rules are independent on the board size. Thus, the board can be as large as desired and rules can remain the same as long as $\mathit{Dirs}$ does not change.

One could point out that the example shows much redundancy. In particular, it does not take advantage of the game symmetry. One of the benefits of the high level RBG is that it mitigates this lack.

\subsection{High level RBG}

High level RBG provides a few additional features making game descriptions easier to read, understand, and modify. Every low level RBG description is also HL-RBG.

\subsubsection{Macro system}

High level RBG introduce a simplified C-like macro system. Macro definitions follow the grammar:

\begin{mdframed}[style=grammar]
\begin{grammar}
<macro-name> = <ident>

<macro-argument> = <ident>

<token-string> ::= potentially empty string of tokens

<macro> ::= `#' <macro-name> [ `(' <macro-argument> \{ `;' <macro-argument> \} `)' ] `=' <token-string>
\end{grammar}
\end{mdframed}

Syntactically macros look like sections known from the low level RBG. However, their names need to be distinct from any predefined sections: \lstinline{board}, \lstinline{players}, \lstinline{variables}, \lstinline{pieces}, and \lstinline{rules}. Macros definitions can be interleaved with sections.

We can divide macros into those with brace-enclosed arguments and those without them. Two macros, one with arguments and one without them, cannot have the same name. However, two macros with arguments can share the same name, as long as they differ in the number of arguments.

\noindent\textbf{Example}:

\begin{lstlisting}
#m1 = x // legal
#m2(a) = a // legal
#m2(a;b) = a+b // legal
#m2(b) = b // illegal
#m2 = x // illegal
\end{lstlisting}

All occurrences (instantiations) of the macro in sections or other macros after its definition are replaced with the token string specified in the macro's definition. 
Argument instances required for macros with arguments can be any strings of tokens, including the empty string. While inserting a value of the macro, all arguments identifiers are replaced by their instances.

\noindent\textbf{Example}:
\begin{lstlisting}
#m0 = m1 // will be interpreted as a plain "m1" string, macro m1 is defined later
#m1 = x
#m2 = m1 // will be replaced with "x"
#m3(a;b) = a + b
#m4 = m3(x;y) // will be replaced with x + y
#m5 = m3(;) // will be replaced with a single +
#m6 = m3 // will be interpreted as plain "m3", no arguments given
#m7 = m1(x) // will be interpreted as "x (x)"
\end{lstlisting}

Many potentially redundant parts of RBG description differs only in the piece color or movement direction. To avoid repeating such code, the macro system comes with the metaoperator \lstinline{~}. During macro instantiation, it concatenates together two or more tokens into one, similarly to the C language \lstinline{##} operator. Both arguments of this operator and its result have to be legal RBG tokens.

\noindent\textbf{Example}:
\begin{lstlisting}
#m1 = x~y
#m2 = m1 // will be replaced with "xy"
#m3(a;b) = a~b
#m4 = m3(x;y) // will be replaced with "xy"
#m5 = m3(8;y) // "8y" is not valid RBG token
#m6 = m3(1;2) // will be interpreted as "12"
#m7 = m3(x~y;z) // will be interpreted as "xyz"
\end{lstlisting}

Using this macro system, we can rewrite breakthrough \verb!#rules! section into better-organized form:

\begin{lstlisting}
#anySquare = (up* + down*)(left* + right*)

#pickUpPiece(color) = {color~Pawn}[empty]

#basicMove(color;oppositeColor;forwardDirection) =
    pickUpPiece(color)
    (
    	forwardDirection left {empty, oppositeColor~Pawn}
      + forwardDirection {empty}
      + forwardDirection right {empty, oppositeColor~Pawn}
    )[color~Pawn]

#endGame = ->>{}

#checkForWin(color;oppositeColor;forwardDirection) =
    [$ color=1][$ oppositeColor=0]
    (
        {! forwardDirection} endGame
      + {? forwardDirection}->oppositeColor
    )

#fullMove(color;oppositeColor;forwardDirection)=
    anySquare
    basicMove(color;oppositeColor;forwardDirection)
    checkForWin(color;oppositeColor;forwardDirection)

#rules = ->white
    (
        fullMove(white;black;up)
        fullMove(black;white;down)
    )*
\end{lstlisting}

Because of the defined macros and their names, comments become practically unnecessary to understand the description. Also, it is much easier to change the rule of movement, so that, e.g.\ pawns can move in all directions.

\subsubsection{Predefined boards}

A great majority of boardgames utilizes only a small set of boards shapes, like rectangles or hexagons. Describing their whole structure as a graph can be tedious and error-prone. The high level RBG defines a few board generators that translate popular board shapes into RBG-compatible graphs. They can be placed in the \lstinline{#board} section instead of the low level RBG graph description. The generators allow a user to define edge names, leaving the node names unspecified.

The first predefined board is \lstinline{rectangle}. It follows the grammar:

\begin{mdframed}[style=grammar]
\begin{grammar}
<board-line> ::= `[' <piece-name> \{ `,' <piece-name> \} `]'

<rectangle> ::= <board-line> \{ <board-line> \}

<rectangle-board> ::= `rectangle (' <label> `,' <label> `,' <label> `,' <label>' `,' <rectangle> `)'
\end{grammar}
\end{mdframed}

All board lines contain the same number of pieces. The edge labels passed to the generator are names of up, down, left, and right directions respectively. Thus, the \lstinline{#rules} section can use them as shifts. The left-most top square is taken as the starting vertex.

Given the \lstinline{rectangle} generator and the macro system, we can rewrite graph representing $3\times3$ rectangle board using much simpler syntax:

\begin{lstlisting}
#line(piece) = [piece,piece,piece]

#board =
    rectangle(up,down,left,right,
        line(blackPawn)
        line(empty)
        line(whitePawn)
    )
\end{lstlisting}

Extending this graph is as simple as inserting more \lstinline{piece}s into the \lstinline{line} macro to increase width, or inserting into the generator more \lstinline{line} instantiations to increase height.

The names of pieces in the last argument of \lstinline{rectangle} can be omitted. In such case, the formula describes a graph that lacks some vertices. This may be useful in defining boards with the standard neighborship structure, but with a non-standard shape.

\textbf{Example}.

\begin{lstlisting}
#board =
    rectangle(up,down,left,right,
        [e,e,e]
        [e, ,e]
        [e,e,e]
    )
\end{lstlisting}

The above description will generate a square board without the central vertex. Omitting vertices is a feature that can be used in all generators.

The next type of generator is \lstinline{hexagon}. As the name implies, it generates the board based on hexagonal cells neighboring with six other nodes (or less when at the board edge). The generator is defined by the similar grammar:

\begin{mdframed}[style=grammar]
\begin{grammar}
<hexagon> = <board-line> \{ <board-line> \}

<hexagon-board> ::= `hexagon (' <label> `,' <label> `,' <label> `,' <label>' `,' <label>' `,' <label>' `,'
<hexagon> `)'
\end{grammar}
\end{mdframed}

Boardlines lengths have to form a hexagon. Thus, up to some point subsequent lines should be longer by one than the previous ones. After that point, every line has to be one cell shorter than the previous. According to these restrictions, the following boardlines sequence is a valid last argument of the \lstinline{hexagon} generator:

\begin{lstlisting}
  [e,e,e,e]
 [e,e,e,e,e]
[e,e,e,e,e,e]
 [e,e,e,e,e]
  [e,e,e,e]
   [e,e,e]
\end{lstlisting}

But this one is not:

\begin{lstlisting}
  [e,e,e,e]
 [e,e,e,e,e]
[e,e,e,e,e,e]
[e,e,e,e,e,e] // should have one 'e' less or more
\end{lstlisting}

The label arguments are north-west, north-east, east, south-east, south-west and west directions identifiers respectively.

\textbf{Example}. The description

\begin{lstlisting}
#board = hexagon(northWest, northEast, east, southEast, southWest, west,
  [e,e]
 [e,e,e]
  [e,e])
\end{lstlisting}

is equivalent to:

\begin{lstlisting}
#board =
     v00[e]{east:v10,southEast:v11,southWest:v01}
     v10[e]{southEast:v21,southWest:v11,west:v00}
     v01[e]{east:v11,northEast:v00,southEast:v02}
     v11[e]{east:v21,northEast:v10,northWest:v00,southEast:v12,southWest:v02,west:v01}
     v21[e]{northWest:v10,southWest:v12,west:v11}
     v02[e]{east:v12,northEast:v11,northWest:v01}
     v12[e]{northEast:v21,northWest:v11,west:v02}
\end{lstlisting}

taking into account initial pieces setting, vertices neighborship, and edge labels.

We also defined \lstinline{cuboid}. It helps to define boards shaped like rectangular cuboids. We describe it with the following grammar:

\begin{mdframed}[style=grammar]
\begin{grammar}
<cuboid> ::= `[' <rectangle> \{ <rectangle> \} `]'

<hexagon-board> ::= `cuboid (' <label> `,' <label> `,' <label> `,' <label>' `,' <label>' `,' <label>' `,'
<cuboid> `)'
\end{grammar}
\end{mdframed}

Its label arguments are up, down, left right, front, and back edge labels respectively. The last argument of the generator is the initial pieces setting in the cuboid.

\textbf{Example}. The description piece:

\begin{lstlisting}
#board = cuboid(up, down, left, right, front, back,
	[[e,e,e]
     [e,e,e]]
    [[w,w,w]
     [w,w,w]]
    [[b,b,b]
     [b,b,b]])
\end{lstlisting}

describes a board, where the \lstinline{e} pieces layer is at the back and \lstinline{b} are on the front of the cuboid.

\subsubsection{Syntactic sugar}

High level RBG also provides a few syntactic constructions which, while insignificant, may slightly improve readability of the game rules.

The one which is the most frequently used is the expression \lstinline{expr^n}. Given a positive natural number $n$, it is equivalent to \lstinline{expr} concatenated $n$ times. This syntax can be useful e.g.\ when defining a shift over many vertices in the same direction:

\begin{lstlisting}
#up8Times = up^8
// has the same meaning as:
#up8Times = up up up up up up up up
\end{lstlisting}

Another commonly appearing pattern is the choice of a piece to be placed in off. For example, pawn promotion in chess can be expressed as:

\begin{lstlisting}
#promotePawn(color; forward) = {color~Pawn}{! forward}([color~Knight]+...+[color~Queen])
\end{lstlisting}

High Level RBG introduce another syntactic sugar to ease expressing such constructions. In our example, we may write:

\begin{lstlisting}
#promotePawn(color; forward) = {color~Pawn}{! forward}[color~Knight, ..., color~Queen]
\end{lstlisting}

This abbreviation may be especially useful in conjunction with macros:

\begin{lstlisting}
#majorPieces(color) = color~Knight,...,color~Queen
#promotePawn(color; forward) = {color~Pawn}{! forward}[majorPieces(color)]
\end{lstlisting}

We can also use commas to simplify concatenations of assignments. It is quite usual to assign scores to many players at once:

\begin{lstlisting}
#endGame = [$ player1 = 0][$ player2 = 100][$ player3 = 50]->>{}
\end{lstlisting}

which can be shortened in HL-RBG to:

\begin{lstlisting}
#endGame = [$ player1 = 0,player2 = 100,player3 = 50]->>{}
\end{lstlisting}

Note that commas in offs sequences have other meaning than in assignments sequences; thus both constructions cannot be mixed and used inside single \lstinline{[ ]} brackets.

\end{document}